
\documentclass[runningheads]{llncs}
\usepackage{graphicx}

\usepackage{tikz}
\usepackage{comment}
\usepackage{amsmath,amssymb} 
\usepackage{color}
\usepackage{graphicx}
\usepackage{subfigure}
\usepackage{caption}
\usepackage{bm}
\usepackage{wrapfig}
\usepackage{enumitem}
\usepackage{breakcites}
\usepackage{setspace}

\usepackage{booktabs} 
\usepackage{makecell}
\usepackage{slashbox}
\usepackage{pifont}
\usepackage{multirow}
\usepackage[accsupp]{axessibility}  

\usepackage[pagebackref=true,breaklinks=true,colorlinks,bookmarks=false]{hyperref}
\hypersetup{linkcolor=[rgb]{0.7,0.1,0.1}}
\hypersetup{citecolor=[rgb]{0.4,0.15,0.95}}
\usepackage{xcolor,colortbl}
\definecolor{Gray}{gray}{0.94}

\newcommand{\eg}{\emph{e.g.}}
\newcommand{\ie}{\emph{i.e.}}

\renewcommand{\captionlabelfont}{\footnotesize}
\newlength\savewidth\newcommand\shline{\noalign{\global\savewidth\arrayrulewidth
  \global\arrayrulewidth 1pt}\hline\noalign{\global\arrayrulewidth\savewidth}}

\usepackage[width=122mm,left=12mm,paperwidth=146mm,height=193mm,top=12mm,paperheight=217mm]{geometry}

\begin{document}

\pagestyle{headings}
\mainmatter
\def\ECCVSubNumber{656}  

\title{Structural Causal 3D Reconstruction} 



\titlerunning{Structural Causal 3D Reconstruction}
%
\author{\small Weiyang Liu\textsuperscript{1,2*} \and
Zhen Liu\textsuperscript{3*} \and
Liam Paull\textsuperscript{3} \and Adrian Weller\textsuperscript{2,4} \and Bernhard Sch\"olkopf\textsuperscript{1}}
\authorrunning{W. Liu et al.}
%
\institute{\textsuperscript{1}Max Planck Institute for Intelligent Systems, T\"ubingen\\ \textsuperscript{2}University of Cambridge\ \ \ \textsuperscript{3}Mila, Universit\'e de Montr\'eal\ \ \  \textsuperscript{4}Alan Turing Institute}

\maketitle

\begin{abstract}
This paper considers the problem of unsupervised 3D object reconstruction from in-the-wild single-view images. Due to ambiguity and intrinsic ill-posedness, this problem is inherently difficult to solve and therefore requires strong regularization to achieve disentanglement of different
latent factors. Unlike existing works that introduce explicit regularizations into objective functions, we look into a different space for implicit regularization -- the structure of latent space. Specifically, we restrict the structure of latent space to capture a topological causal ordering of latent factors (\ie, representing causal dependency as a directed acyclic graph). We first show that different causal orderings matter for 3D reconstruction, and then explore several approaches to find a task-dependent causal factor ordering. Our experiments demonstrate that the latent space structure indeed serves as an implicit regularization and introduces an inductive bias beneficial for reconstruction.
\end{abstract}

\vspace{-5mm}
\section{Introduction}
\vspace{-1mm}
Understanding the 3D structures of objects from their 2D views has been a longstanding and fundamental problem in computer vision. 
Due to the lack of high-quality 3D data, unsupervised single-view 3D reconstruction is typically favorable; however, it is an ill-posed problem by nature, and it typically requires a number of carefully-designed priors and regularizations to achieve good disentanglement of latent factors~\cite{kanazawa2018learning,suwajanakorn2018discovery,chen2019learning,kato2019learning,chen2019unsupervised,wu2020unsupervised,feng2021learning}. Distinct from these existing works that focus on introducing explicit regularizations, we aim to explore how the structure of latent space can implicitly regularize 3D reconstruction, and to answer the following question: \emph{Can a suitable structure of latent space encode helpful implicit regularization and yield better inductive bias?}

\begin{figure}[t]
  \renewcommand{\captionlabelfont}{\footnotesize}
  \setlength{\abovecaptionskip}{6pt}
  \setlength{\belowcaptionskip}{-10pt}
  \centering
  \vspace{-1mm}
  \includegraphics[width=4.65in]{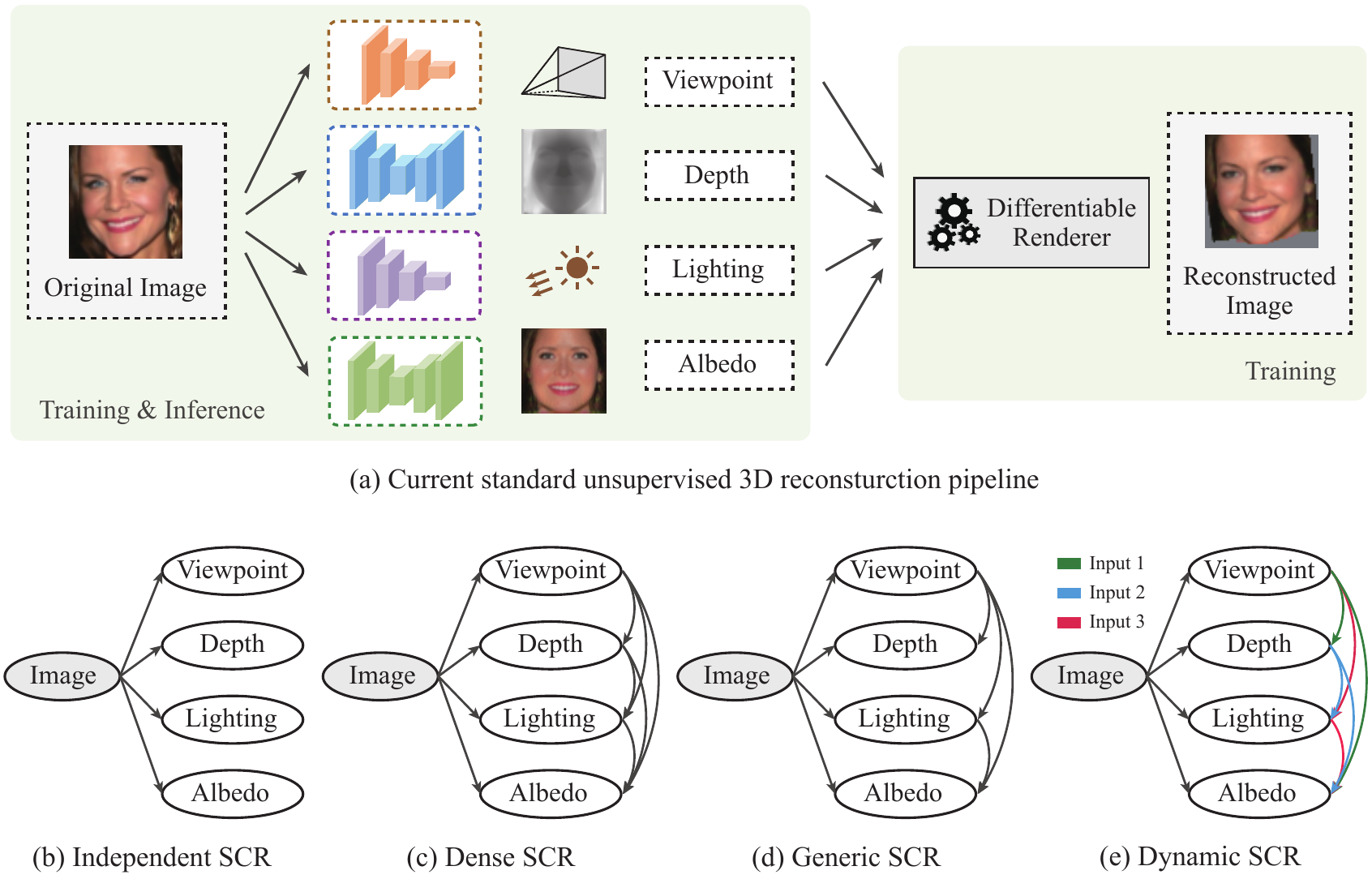}
  \caption{\footnotesize (a) Overview of the standard 3D reconstruction pipeline. (b) Graphical model of independent SCR, which is adopted in the standard pipeline and assumes full conditional independence. (c) Graphical model of a dense SCR example. This makes no assumption on the distribution. (d) Graphical model of a generic SCR example. This assumes partial conditional independence. (e) Graphical model of a dynamic SCR example. This yields strong flexibility. All directed edges are learned in practice.}\label{overview}
\end{figure}

Current single-view 3D reconstruction methods~\cite{wu2020unsupervised,kato2019learning,li2020self} typically decompose 3D objects into several semantic latent factors such as 3D shape, texture, lighting and viewpoint. These latent factors are independently extracted from single 2D images and then fed into a differentiable renderer to reconstruct the original 2D images, as illustrated in Fig.~\ref{overview}(a). Conditioned on the input image, these latent factors are typically assumed to be independent from each other. Such an assumption for disentanglement can be too strong and sometimes unrealistic, because it suggests that the estimated viewpoint will not affect the estimation of lighting in the image, which contradicts the formation of realistic images. This observation motivates us to explore how the dependency structure of latent factors implicitly regularizes the encoder and improves disentanglement. 

Taking inspiration from structural causal models~\cite{pearl2009causality}, we propose the \textbf{Structural Causal Reconstruction (SCR)} framework which introduces structural priors to the latent space. We consider the causal ordering of latent factors and study how different causal orderings can introduce different inductive biases.

Depending on the type of causal orderings and the corresponding flexibility, we derive three SCR variants: dense SCR which learns a chain factorization without any embedded conditional independence, generic SCR which learns a directed acyclic graph (DAG) over the latent factors, and dynamic SCR which learns a dynamic DAG that is dependent on the input image. We note that the standard 3D reconstruction pipeline can be viewed as independent SCR as shown in Fig.~\ref{overview}(b) (\ie, viewpoint, depth, lighting and albedo are conditionally independent from each other given the input image), while dense SCR does not assume any conditional independence. Generic SCR learns a DAG over the latent factors and serves as an interpolation between independence SCR and dense SCR by incorporating partial conditional independence. Both dense SCR and generic SCR are learned with a static ordering which is fixed once trained. To accommodate the over-simplified rendering model and the complex nature of image formation, we propose dynamic SCR that can capture more complex dependency by learning input-dependent DAGs. This can be useful when modeling in-the-wild images that are drawn from a complex multi-modal distribution~\cite{murphy2002dynamic}. Specifically, we apply Bayesian optimization to dense SCR to search for the best dense causal ordering of the latent factors. For generic SCR, we first propose to directly learn a DAG with an additional regularization. Besides that, we further propose a two-phase algorithm: first running dense SCR to obtain a dense ordering and then learning the edges via masking. For dynamic SCR, we propose a self-attention approach to learn input-dependent DAGs.

From a distribution perspective, independent SCR (Fig.~\ref{overview}(b)) is the least expressive graphical model in the sense that it imposes strong conditional independence constraints and therefore limits potential distributions that can factorize over it. On the contrary, any conditional distribution $P(\bm{V},\bm{D},\bm{L},\bm{A}|\bm{I})$ (where $\bm{V},\bm{D},\bm{L},\bm{A},\bm{I}$ denote viewpoint, depth, lighting, albedo and image, respectively) can factorize over dense SCR, making it the most expressive variant for representing distributions. Generic SCR unifies both independent SCR and dense SCR by incorporating a flexible amount of conditional independence constraints. Dynamic SCR is able to capture even more complex conditional distribution that is dynamically changing for different input images.

\setlength{\columnsep}{13pt}
\begin{wrapfigure}{r}{0.295\textwidth}
  \begin{center}
  \advance\leftskip+1mm
  \renewcommand{\captionlabelfont}{\footnotesize}
    \vspace{-0.57in}  
    \includegraphics[width=0.268\textwidth]{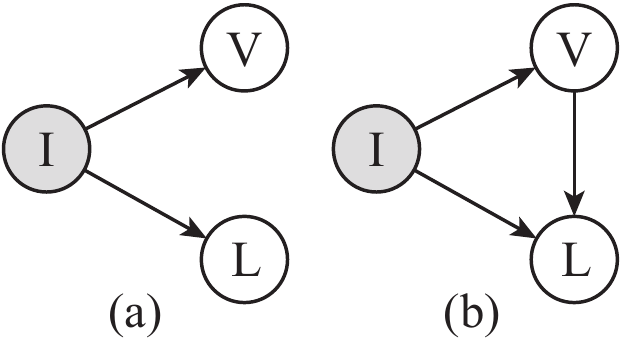}
    \vspace{-0.08in} 
    \caption{\footnotesize (a) Viewpoint and lighting are extracted independently from the input image. (b) The extracted viewpoint gives constraints on lighting. These arrows denote encoding latent variables from the image (\ie, anti-causal direction).
    }\label{intro-example}
    \vspace{-0.4in} 
  \end{center}
\end{wrapfigure}

Intuition for why learning a latent dependency structure helps 3D reconstruction comes from the underlying entanglement among estimated viewpoint, depth, lighting and albedo. For example, conditioned on a given 2D image, a complete disentanglement between viewpoint and lighting indicates that changing the estimated viewpoint of an object will not change its estimated lighting. This makes little sense, since changing the viewpoint will inevitably affect the estimation of lighting.
In contrast to existing pipelines that extract viewpoint and depth independently from the image (Fig.~\ref{intro-example}(a)), the information of viewpoint may give constraints on lighting (\eg, modeled as a directed edge from $\bm{V}$ to $\bm{L}$ in Fig.~\ref{intro-example}(b)). Therefore, instead of ignoring the natural coupling among latent factors and assuming conditional independence, we argue that learning a suitable dependency structure for latent factors is crucial for intrinsic disentanglement. In general, modeling the latent dependency and causality among viewpoint, depth, lighting and albedo renders an implicit regularization for disentanglement, leading to strong generalizability. Beside the intuition from the anti-causal direction, Section~\ref{causal} gives another interpretation for SCR from the causal direction. Our contributions are:

\vspace{-1mm}
\begin{itemize}
    \item We explicitly model the causal structure among the latent factors.
    \item To learn a causal ordering, we propose three SCR variants including dense SCR, generic SCR and dynamic SCR. Each one yields a different level of distribution expressiveness and modeling flexibility.
    \item We constrain the latent space structure to be a topological causal ordering (which can represent arbitrary DAGs), reducing the difficulty of learning.
    \item Our method is in parallel to most current 3D reconstruction pipelines and can be used simultaneously with different pipelines such as \cite{wu2020unsupervised,tewari2017mofa,feng2021learning,li2020self}.
    \item Our empirical results show that different causal orderings of latent factors lead to significantly different 3D reconstruction performance.
\end{itemize}

\section{Related Work}
\vspace{-0.75mm}

\textbf{Multi-view 3D reconstruction}. This method usually requires multi-view images of the same target object. Classical techniques such as Structure from Motion~\cite{ozyesil2017survey} and Simultaneous Localization and Mapping~\cite{fuentes2015visual} rely on hand-crafted geometric features and matching across different views. Owing to the availability of large 3D object datasets, modern approaches~\cite{choy20163d,kar2017learning,xie2019pix2vox} can perform multi-view 3D reconstruction with neural networks that map 2D images to 3D volumes.

\vspace{0.9mm}

\noindent\textbf{Shape from X}. There are many alternative monocular cues that can be used for reconstructing shapes from images, such as shading~\cite{horn1989shape,zhang1999shape}, silhouettes~\cite{koenderink1984does}, texture~\cite{witkin1981recovering} and symmetry~\cite{mukherjee1995shape,franccois2003mirror}. These methods are generally not applicable to in-the-wild images due to their strong assumptions. Shape-from-symmetry~\cite{mukherjee1995shape,franccois2003mirror,thrun2005shape,sinha2012detecting} assumes the symmetry of the target object, making use of the original image and its horizontally flipped version as a stereo pair for 3D reconstruction. \cite{sinha2012detecting} demonstrates the possibility to detect symmetries and correspondences using descriptors. Shape-from-shading assumes a specific shading model (\eg, Phong shading~\cite{phong1975illumination} and spherical harmonic lighting~\cite{green2003spherical}), and solves an inverse rendering problem to decompose different intrinsic factors from 2D images.

\vspace{0.9mm}

\noindent\textbf{Single-view 3D reconstruction}. This line of research \cite{choy20163d,girdhar2016learning,gwak2017weakly,tulsiani2017multi,wiles2017silnet,yan2016perspective,zhu2017rethinking,fan2017point,henderson2018learning,li2020self,wu2020unsupervised,hu2021self,fahim2021single} aims to reconstruct a 3D shape from a single-view image. \cite{wang2018pixel2mesh,pan2019deep,wen2019pixel2mesh++} use images and their corresponding ground truth 3D meshes as supervisory signals. This, however, requires either annotation efforts~\cite{xiang2016objectnet3d} or synthetic construction~\cite{chang2015shapenet}. To avoid 3D supervision, \cite{kato2018neural,liu2019soft,kato2019learning,chen2019learning} consider an analysis-by-synthesis approach with differentiable rendering, but they still require either multi-view images or known camera poses. To further reduce supervision, \cite{kanazawa2018learning} learns category-specific 3D template shapes from an annotated image collection, but annotated 2D keypoints are still necessary in order to infer camera pose correctly. \cite{henderson2020learning} also studies a similar category-specific 3D reconstruction from a single image. \cite{li2020self} estimates 3D mesh, texture and camera pose of both rigid and non-rigid objects from a single-view image using silhouette as supervision. Videos~\cite{agrawal2015learning,zhou2017unsupervised,novotny2017learning,wang2018learning,wen2021self} are also leveraged as a form of supervision for single-view 3D reconstruction. For human bodies and faces, \cite{kanazawa2018end,gerig2018morphable,wang2019adversarial,gecer2019ganfit,yi2019mmface,choutas2020monocular,feng2021learning,feng2021collaborative,wen2021self,blanz1999morphable} reconstruct 3D shapes from single-view images with a predefined shape model such as SMPL~\cite{loper2015smpl}, FLAME~\cite{li2017learning} or BFM~\cite{paysan20093d}. Among many works in single-view 3D reconstruction, we are particularly interested in a simple and generic unsupervised framework from \cite{wu2020unsupervised} that utilizes the symmetric object prior. This framework adopts the Shape-from-shading pipeline to extract intrinsic factors of images, including 3D shape, texture, viewpoint and illumination parameters (as shown in Fig.~\ref{overview}(a)). The encoders are trained to minimize the reconstruction error between the input image and the rendered image. It shows impressive results in reconstructing human faces, cat faces and synthetic cars. 

For the sake of simplicity, we build the SCR pipeline based on the framework of \cite{wu2020unsupervised} and focus on studying how the causal structure of latent factors affects the 3D reconstruction performance. We emphasize that our method is a parallel contribution to \cite{wu2020unsupervised} and is generally applicable to any 3D reconstruction framework without the need of significant modifications.

\section{Causal Ordering of Latent Factors Matters}
\vspace{-1.5mm}

The very first question we need to address is ``\emph{Does the causal ordering of latent factors matter for unsupervised 3D reconstruction?}''. Without an affirmative answer, it will be pointless to study how to learn a good causal ordering. 
\vspace{-3.4mm}

\subsection{A Motivating Example from Function Approximation}
\vspace{-0.5mm}

\setlength{\columnsep}{13pt}
\begin{wrapfigure}{r}{0.295\textwidth}
  \begin{center}
  \advance\leftskip+1mm
  \renewcommand{\captionlabelfont}{\footnotesize}
    \vspace{-0.48in}  
    \includegraphics[width=0.268\textwidth]{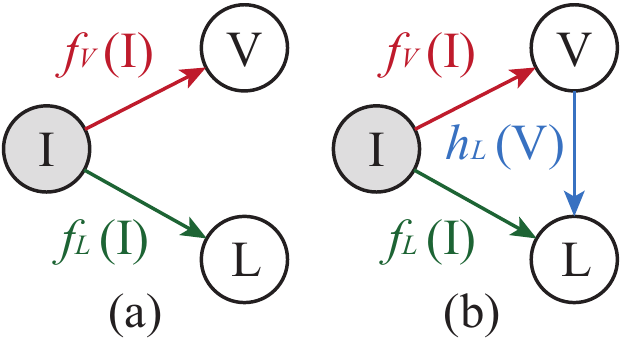}
    \vspace{-0.05in} 
    \caption{\footnotesize Two structures encode the lighting factor.}\label{example2}
    \vspace{-0.4in} 
  \end{center}
\end{wrapfigure}

We start with a motivating example to show the advantages of modeling the dependency between latent factors. We take a look at the example in Fig.~\ref{example2} where the lighting factor $\bm{L}$ can be represented using either $f_L(\bm{I})$ in Fig.~\ref{example2}(a) or $f_L(\bm{I})+h_L(\bm{V})$ in Fig.~\ref{example2}(b). There are a few perspectives to compare these two representations and see their difference (also see Appendix~\ref{app_formal_func}):

\vspace{-1.9mm}
\begin{itemize}
    \item We first assume the underlying data generating function for lighting is given by $\bm{L}:=f_L^*(\bm{I})+h_L^*(\bm{V})$ where $f_L^*$ and $h_L^*$ are two polynomial functions of order $p$. Because $\bm{V}:=f_V^*(\bm{I})$ where $f_V^*$ is also a polynomial function of order $p$, we can then write the lighting function as $\bm{L}:=f_L^*(\bm{I})+h_L^*\circ f_V^*(\bm{I})$ which is a polynomial function order $2p$.
    The lighting function can be learned with either $\bm{L}=f_L(\bm{I})$ in Fig.~\ref{example2}(a) or $\bm{L}=f_L(\bm{I})+h_L\circ f_V(\bm{I})$ in Fig.~\ref{example2}(b). The previous requires the encoder $f_L(\bm{I})$ to learn a polynomial of order $2p$, while the latter requires learning that of only order $p$.
    \item 
    From the perspective of function approximation, it is obvious that $f_L(\bm{I})+h_L\circ f_V(\bm{I})$ is always more expressive than $f_L(\bm{I})$ given that $f_L,h_L,f_V$ are of the same representation capacity. Therefore, the structure shown in Fig.~\ref{example2}(b) is able to capture more complex and nonlinear lighting function.
    \item Making the lighting $\bm{L}$ partially dependent on the viewpoint $\bm{V}$ gives the lighting function an inherent structural prior, which may implicitly regularizes the function class and constrain its inductive bias.
\end{itemize}

\vspace{-6.5mm}

\subsection{Expressiveness of Representing Conditional Distributions}
\vspace{-0.5mm}

The flexibility of SCR can also be interpreted from a distribution perspective. Most existing 3D reconstruction pipelines can be viewed as independent SCR whose conditional distribution $P(\bm{V},\bm{D},\bm{L},\bm{A}|\bm{I})$ can be factorized into
\begin{equation}
\small
P(\bm{V},\bm{D},\bm{L},\bm{A}|\bm{I})=P(\bm{V}|\bm{I})\cdot P(\bm{D}|\bm{I})\cdot P(\bm{L}|\bm{I})\cdot P(\bm{A}|\bm{I})
\end{equation}
which renders the conditional independence among $\bm{V},\bm{D},\bm{L},\bm{A}$. This is in fact a strong assumption that largely constrains the potential family of distributions that can factorize over this model, making this model less expressive in representing conditional distributions. In contrast, dense SCR does not assume any conditional independence because it yields the following factorization (this is just one of the potential orderings and we randomly choose one for demonstration):
\begin{equation}
\small
P(\bm{V},\bm{D},\bm{L},\bm{A}|\bm{I})=P(\bm{V}|\bm{I})\cdot P(\bm{D}|\bm{I},\bm{V})\cdot P(\bm{L}|\bm{I},\bm{V},\bm{D})\cdot P(\bm{A}|\bm{I},\bm{V},\bm{D},\bm{L})
\end{equation}
which imposes no constraints to the factorized conditional distribution and is more expressive. Therefore, any dense ordering has this nice property of assuming no conditional independence among latent factors. However, there exists a trade-off between expressiveness and learnability. A more expressive model usually requires more data to train and is relatively sample-inefficient. Generic SCR is proposed in search of a sweet spot between expressiveness and learnability by incorporating partial conditional independence. Taking Fig.~\ref{overview}(c) as an example, we can observe that this model assumes $P(\bm{D}\perp\bm{L}|\bm{I})$ and $P(\bm{D}\perp\bm{A}|\bm{I})$. Going beyond generic SCR, dynamic SCR aims to tackle with the scenario where the conditional distribution $P(\bm{V},\bm{D},\bm{L},\bm{A}|\bm{I})$ is dynamically changing rather than being static for all the images. This can greatly enhance the modeling flexibility.

\vspace{-3.7mm}
\subsection{Modeling Causality in Rendering-based Decoding}\label{causal}
\vspace{-1.3mm}
\setlength{\columnsep}{12pt}
\begin{wrapfigure}{r}{0.41\textwidth}
  \begin{center}
  \advance\leftskip+1mm
  \renewcommand{\captionlabelfont}{\footnotesize}
    \vspace{-0.42in}  
    \includegraphics[width=0.397\textwidth]{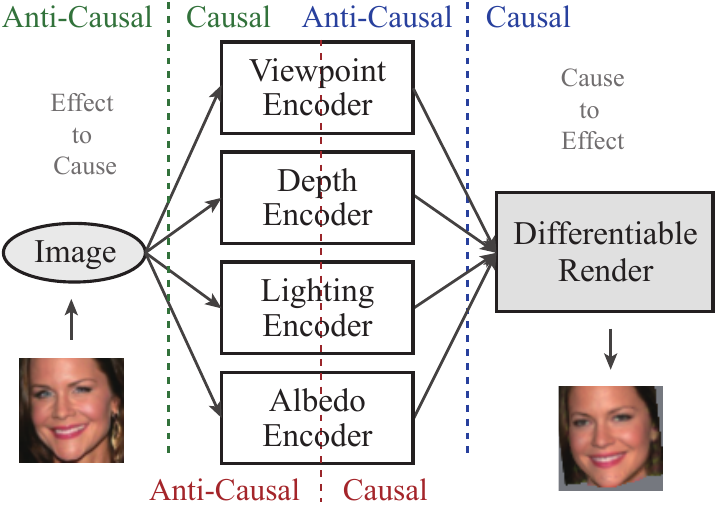}
    \vspace{-0.2in} 
    \caption{\footnotesize Three possible partitions of anti-causal and causal mappings.}\label{partition}
    \vspace{-0.3in} 
  \end{center}
\end{wrapfigure}

The previous subsection shows that there is no difference for different dense orderings in representing $P(\bm{V},\bm{D},\bm{L},\bm{A}|\bm{I})$. This conclusion is drawn from the perspective of modeling correlation. However, one of the most significant properties of topological ordering is its ability to model acyclic causality. In terms of causal relationships, different orderings (including both dense and generic ones) make a difference. The standard 3D reconstruction pipeline is naturally an autoencoder architecture, where the encoder and decoder can be interpreted as anti-causal and causal mappings, respectively~\cite{scholkopf2021toward,SchJanPetSgoetal12,weichwald2014causal,besserve2018intrinsic,kilbertus2018generalization,Leeb-SAE}. Here, the causal part is a generative mapping, and the anti-causal part is in the opposite direction, inferring causes from effects. However, how to determine which part of the pipeline should be viewed as anti-causal or causal remains unclear. Here we discuss three possible partitions of anti-causal and causal mappings, as shown in Fig.~\ref{partition}. The partition denoted by green dashed line uses an identity mapping as the anti-causal direction and the rest of the pipeline performs causal reconstruction. This partition does not {\parfillskip0pt\par}

\setlength{\columnsep}{13pt}
\begin{wrapfigure}{r}{0.35\textwidth}
  \begin{center}
  \advance\leftskip+1mm
  \renewcommand{\captionlabelfont}{\footnotesize}
    \vspace{-0.39in}  
    \includegraphics[width=0.34\textwidth]{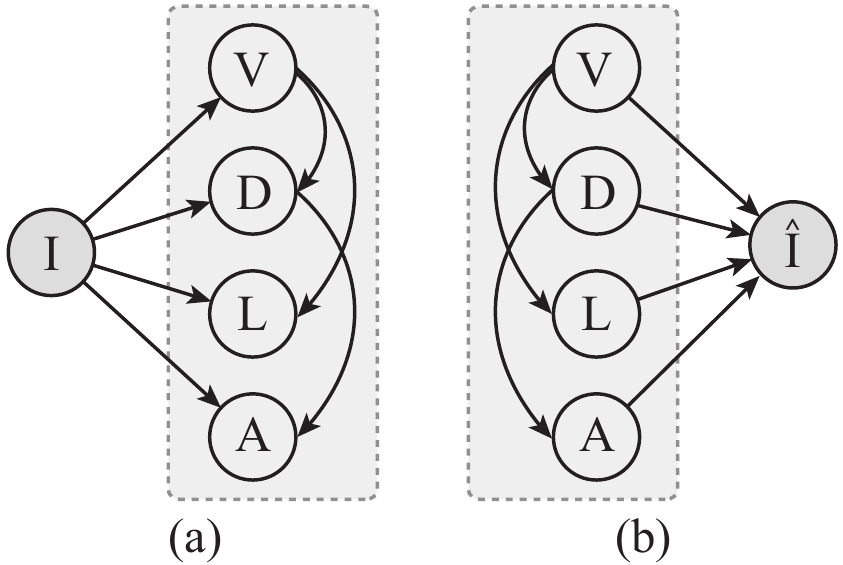}
    \vspace{-0.24in} 
    \caption{\footnotesize Latent structure modeling from (a) anti-causal direction and (b) causal direction. Gray regions denote where the causal ordering is learned.
    }\label{causal_anticausal}
    \vspace{-0.41in} 
  \end{center}
\end{wrapfigure}
\noindent explicitly model the causes and may not be useful. For the partition labeled by the blue dashed line, all the encoders are viewed as anti-causal, so the latent factor ordering is also part of anti-causal learning and does not necessarily benefit from the underlying causal ordering (\ie, causal DAG~\cite{vowels2021d}, cf.\ \cite{Leeb-SAE}). The partition denoted by the red dashed views part of the encoder as anti-causal learning and the rest of the encoder along with the renderer as causal learning. This partition is particularly interesting because it puts the latent factor ordering to the causal direction and effectively connects latent factor ordering to the underlying causal ordering. Our SCR framework (in Section~\ref{sectlfo}) is designed based on such insight. When the underlying causal ordering is available, using it as the default ordering could be beneficial. Although the causal ordering could improve strong generalization~\cite{kilbertus2018generalization}, learning the causal ordering without additional knowledge (\eg, interventions or manipulations such as randomized experiment) is difficult and out of our scope. \emph{We hypothesize that the underlying causal ordering leads to fast, generalizable and disentangled 3D reconstruction, and learning causal ordering based on these criteria may help us identify crucial causal relations.} As an encouraging signal, one of the best-performing dense ordering (DAVL) well matches the conventional rendering procedures in OpenGL, which is likely to be similar to the underlying causal ordering.

In the previous examples of Fig.~\ref{intro-example} and Fig.~\ref{example2}, we justify the necessity of the topological ordering from the factor estimation (\ie, anti-causal) perspective. As discussed above, we can alternatively incorporate the causal ordering to the causal mapping and model the causality among latent factors in the decoding (\ie, generative) process, which well matches the design of structural causal models. This is also conceptually similar to \cite{yang2021causalvae,shen2020disentangled} except that SCR augments the decoder with a physics-based renderer. Fig.~\ref{causal_anticausal} shows two interpretations of latent factor ordering from the causal and anti-causal directions. While the causal mapping encourages SCR to approximate the underlying causal ordering, the anti-causal mapping does not necessarily do so. The final learned causal ordering may be the result of a trade-off between causal and anti-causal mapping.

\vspace{-3.6mm}
\subsection{Empirical Evidence on 3D Reconstruction}
\vspace{-.9mm}

\begin{figure}[t]
  \renewcommand{\captionlabelfont}{\footnotesize}
  \setlength{\abovecaptionskip}{4pt}
  \setlength{\belowcaptionskip}{-10pt}
  \centering
  \vspace{-2.3mm}
  \includegraphics[width=4.7in]{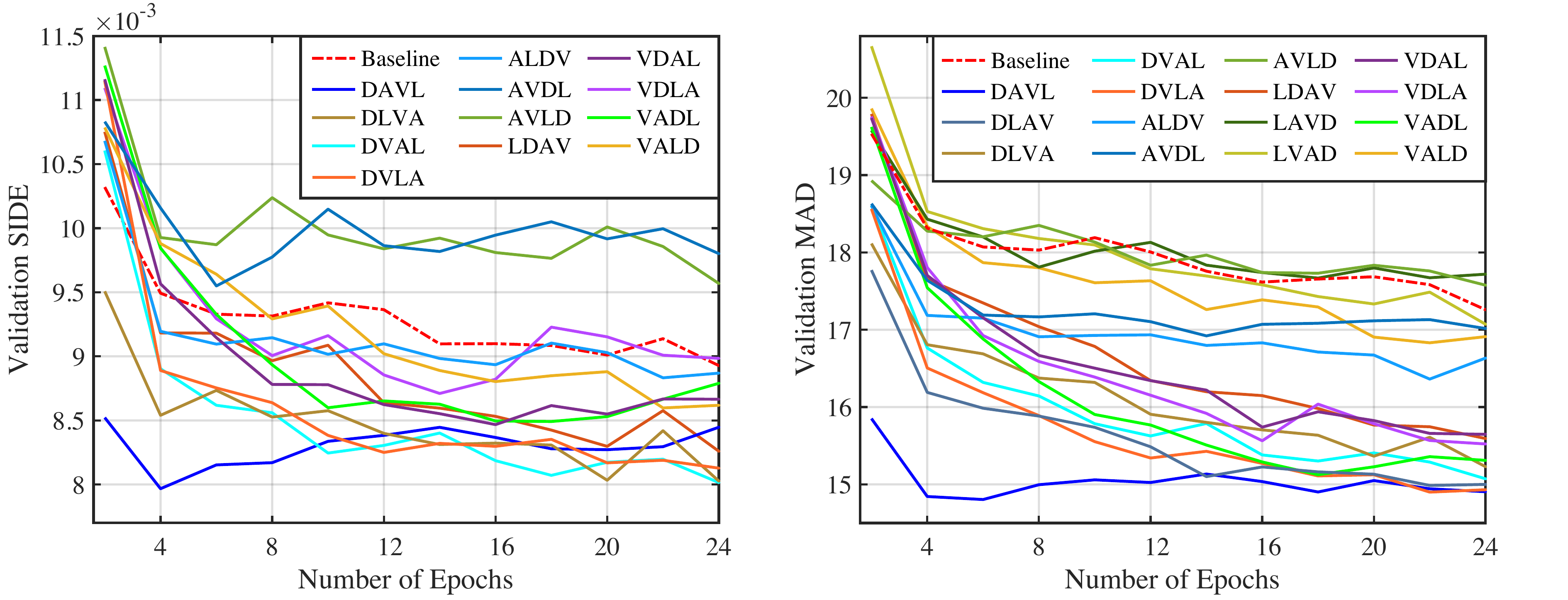}
  \caption{\footnotesize The scale-invariant depth error (left) and mean angle deviation (right) on the BFM dataset~\cite{paysan20093d} for different dense causal orderings. For visualization clarity, we plot the SIDE of the best three orderings, the worst three orderings and random six orderings. For MAD, we plot the same selection of orderings along with the best three and worst three orderings. We denote depth, albedo, lighting and viewpoint as D, A, L and V, respectively. For the full results, please refer to Appendix~\ref{app_dense_lfo}.}\label{empirical_evidence}
\end{figure}

Most importantly, we demonstrate the empirical performance of different dense causal orderings for unsupervised 3D reconstruction. The details of our pipeline and the experimental settings are given in Section~\ref{sectlfo} and Appendix~\ref{exp_detail}, respectively. Here we focus on comparing different dense orderings. As can be observed from Fig.~\ref{empirical_evidence}, different settings for dense SCR yield significantly different empirical behaviors, validating our claim that topological causal ordering of latent factors matters in unsupervised 3D reconstruction. Moreover, we discover that most of the dense orderings perform consistently for both SIDE and MAD metrics. For example, depth-albedo-viewpoint-lighting, depth-viewpoint-albedo-lighting and depth-viewpoint-lighting-albedo perform consistently better than the other dense orderings and the baseline (\ie, independent SCR). This again matches our intuition in Section~\ref{causal} that different dense ordering indicates different causality and leads to different disentanglement/reconstruction performance despite being equivalent in representing the conditional distribution $P(\bm{V},\bm{D},\bm{L},\bm{A}|\bm{I})$.

Interestingly, the well-performing dense orderings also seem to match our knowledge about the underlying causal ordering. For example, we also tend to put viewpoint in front of lighting, because the viewpoint will cause the change of lighting effects on the object. Almost all the well-performing dense orderings have this pattern, suggesting that the well-performing orderings tend to match the intrinsic causality that is typically hard to obtain in practice.

\vspace{-3.4mm}
\section{Learning Causal Ordering for 3D Reconstruction}
\vspace{-1.75mm}
We introduce a generic framework to learn causal ordering. Our proposed pipeline and algorithms to learn different variants of SCR are by no means optimal ones and it remains an open problem to learn a good causal ordering. We instead aim to show that a suitable causal ordering is beneficial to 3D reconstruction.

\vspace{-3.7mm}
\subsection{General SCR Framework}\label{sectlfo}
\vspace{-1.3mm}

Our unsupervised 3D reconstruction pipeline is inspired by \cite{wu2020unsupervised} but with some novel modifications to better accommodate the learning of causal ordering. Our goal is to study how causal ordering affects the disentanglement and generalizability in 3D reconstruction rather than achieving state-of-the-art performance.

\vspace{1mm}

\noindent\textbf{Decoding from a common embedding space}. A differentiable renderer typically takes in latent factors of different dimensions, making it less convenient to incorporate causal factor ordering. In order to easily combine multiple latent factors, we propose a learnable decoding method that includes additional neural networks ($f_V^2,f_D^2,f_L^2,f_A^2$ shown in Fig.~\ref{pipeline}) to the differentiable renderer. These neural networks transform the latent factors from a common $d$-dimensional embedding space ($\bm{u}_V,\bm{u}_D,\bm{u}_L,\bm{u}_A$) to their individual dimensions ($\bm{V},\bm{D},\bm{L},\bm{A}$) such that the differentiable renderer can directly use them as inputs. 

\vspace{1mm}

\noindent\textbf{Implementing SCR in a common embedding space}. Since all the latent factors can be represented in a common embedding space of the same dimension, we now introduce how to implement SCR in this pipeline. We start by listing a few key desiderata: (1) all variants of SCR should have (roughly) the same number of trainable parameters as independent SCR (baseline) such that the comparison is meaningful; (2) learning SCR should be efficient, differentiable and end-to-end; (3) different structures among latent factors can be explored in a unified framework by imposing different constraints on the adjacency matrix.

\begin{figure}[t]
  \renewcommand{\captionlabelfont}{\footnotesize}
  \setlength{\abovecaptionskip}{5pt}
  \setlength{\belowcaptionskip}{-10pt}
  \centering
  \vspace{-2mm}
  \includegraphics[width=4.7in]{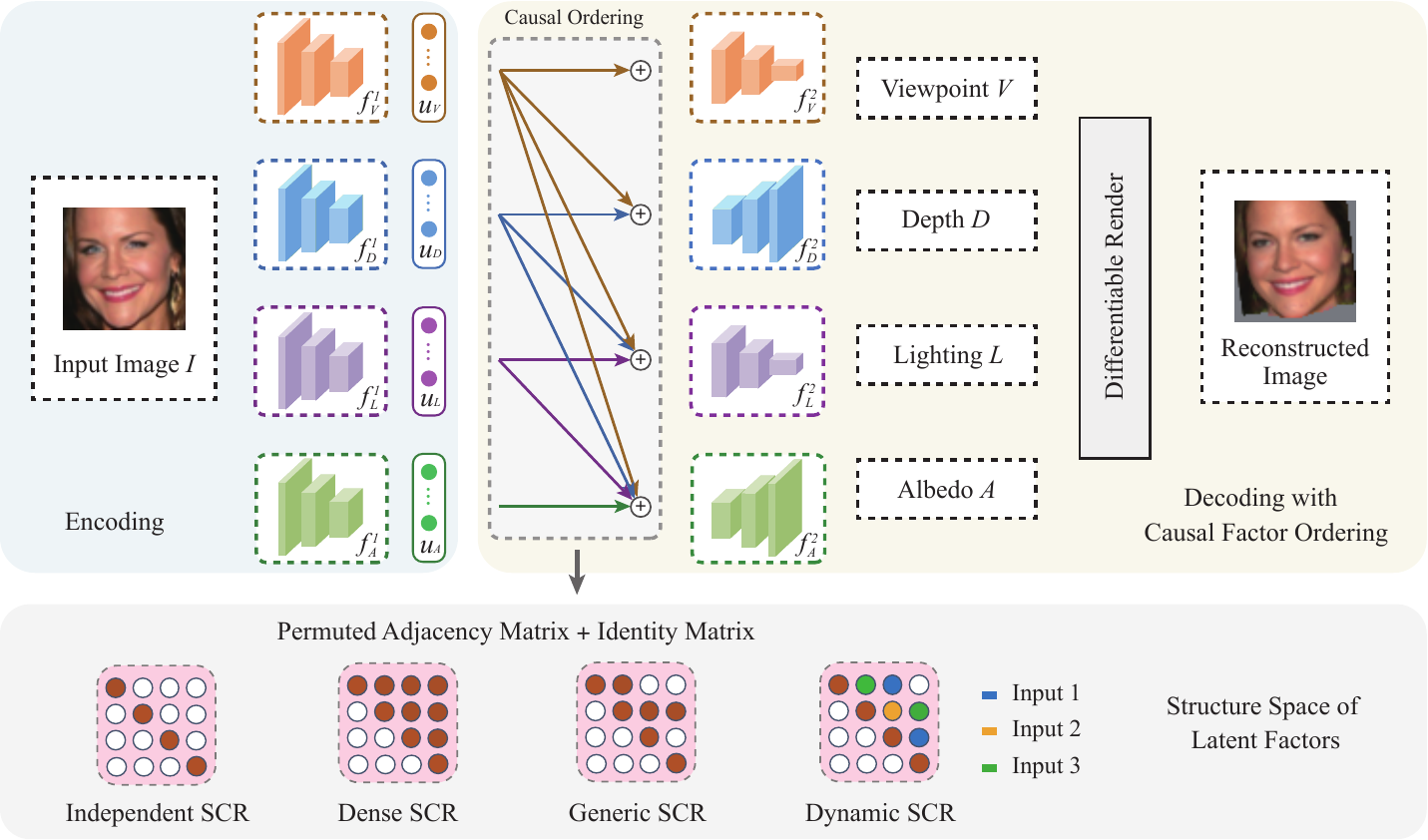}
  \caption{\footnotesize Our unsupervised 3D reconstruction pipeline to explore causal ordering. The causal edges in the figure are for illustration. Actual edges are learned in practice.}\label{pipeline}
\end{figure}

We first interpret conditional probability in terms of neural networks. For example, $P(\bm{V}|\bm{I},\bm{D})$ can be implemented as a single neural network $\bm{V}=f_V(\bm{I},\bm{D})$ that takes both image $\bm{I}$ and depth $\bm{D}$ as input. Instead of parameterizing the encoder $f_V$ with one neural network, we separate $f_V$ into two neural networks $f_V^1,f_V^2$ -- the first one $f_V^1$ aims to map different factors into a common embedding space of the same dimension, and the second one $f_V^2$ transforms the embedding to the final factor that can be used directly for the differentiable renderer. Taking $P(\bm{V}|\bm{I},\bm{D})$ as an example, we model it using $\bm{V}=f_V^2(f_V^1(\bm{I}),\bm{u}_D)$. We define the SCR adjacency matrix that characterizes the dependency structure among latent factors as $\bm{M}=[\bm{M}_V,\bm{M}_D,\bm{M}_L,\bm{M}_A]\in\mathbb{R}^{4\times 4}$ where $\bm{M}_V=[M_{VV},M_{VD},M_{VL},M_{VA}]^\top\in\mathbb{R}^{4\times 1}$ and $M_{VD}$ denotes the weight of the directed edge from $\bm{V}$ to $\bm{D}$ (the weight can be constrained to be either binary or continuous). Because causal ordering is equivalent to DAG, $\bm{M}$ can be permuted into a strictly upper triangular matrix. Generally, latent factors are modeled by

\vspace{-2.5mm}
\begin{equation}
\small
\begin{aligned}
    \bm{V}=f_V^2\left(f_V^1\left( \bm{I} \right),\bm{M}_V^\top\bm{u}\right)~~~~&~~~~\bm{D}=f_D^2\left(f_D^1\left( \bm{I} \right),\bm{M}_D^\top\bm{u}\right)\\
    \bm{L}=f_L^2\left(f_L^1\left( \bm{I} \right),\bm{M}_L^\top\bm{u}\right)~~~~&~~~~\bm{A}=f_A^2\left(f_A^1\left( \bm{I} \right),\bm{M}_A^\top\bm{u}\right)
\end{aligned}
\end{equation}
\vspace{-2mm}

\noindent where $\bm{u}=[\bm{u}_V;\bm{u}_D;\bm{u}_L;\bm{u}_A]\in\mathbb{R}^{4\times d}$. The input to $f_V^2,f_D^2,f_L^2,f_A^2$ can either be added element-wisely or concatenated, and we use element-wise addition in order not to introduce additional parameters. $\bm{M}$ exactly implements causal ordering as an equivalent form of causal DAG. More generally, $\bm{M}$ characterizes 
the latent space structure and can also be constrained to be some other family of structures.

\vspace{1mm}

\noindent\textbf{Interpreting SCR as a part of causal mapping}. After modeling the latent factors with two separate neural networks, we can view $f_V^1,f_D^1,f_L^1,f_A^1$ as the encoding process (\ie, the light blue region in Fig.~\ref{pipeline}). Different from \cite{wu2020unsupervised}, we view the causal ordering, $f_V^2,f_D^2,f_L^2,f_A^2$ and differentiable renderer as the decoding process (\ie, the light yellow region in Fig.~\ref{pipeline}). This can be understood as an augmented trainable 
physics-based renderer which performs rendering with additional neural networks and a causal ordering. More importantly, incorporating causal ordering to the decoding process makes it a part of causal mapping, which may produce more interpretable ordering due to its intrinsic connection to the underlying causality. Therefore, our novel pipeline design makes it possible to benefit from (or even estimate) the underlying causal ordering.

\vspace{1mm}
\noindent\textbf{Loss functions}. To avoid introducing additional priors to SCR and better study the effect of causal ordering, we stick to the same loss functions as \cite{wu2020unsupervised}. The loss function is defined as  $\mathcal{L}=\mathcal{L}_{\text{rec}}(\hat{\bm{I}},\bm{I})+\lambda_f\mathcal{L}_{\text{rec}}(\hat{\bm{I}}',\bm{I})+\lambda_p\mathcal{L}_{\text{p}}(\hat{\bm{I}},\bm{I})$ where $\mathcal{L}_{\text{rec}}$ is the reconstruction loss and $\mathcal{L}_{\text{p}}$ is the perceptual loss. $\lambda_f,\lambda_p$ are hyperparameters $\hat{\bm{I}}$ is the reconstructed image with original depth and albedo. $\hat{\bm{I}}'$ is the reconstructed image with flipped depth and albedo. Similar to \cite{wu2020unsupervised}, we also use the confidence map to compensate asymmetry. Appendix~\ref{exp_detail} provides the detailed formulation.

\vspace{1mm}
\noindent\textbf{Learning causal ordering}. We formulate the SCR learning as a bi-level optimization where the inner optimization is to train the 3D reconstruction networks with $\mathcal{L}$ and the outer optimization learns a suitable adjacency matrix $\bm{M}$:

\vspace{-4mm}
\begin{equation}\label{opt_obj}
\small
\min_{\bm{M}\in \mathcal{M}_{\text{DAG}}} \mathcal{L}_{\text{val}}(\bm{W}^*(\bm{M}),\bm{M})~~~~~~\text{s.t.}~\bm{W}^*(\bm{M})=\arg\min_{\bm{W}} \mathcal{L}_{\text{train}}(\bm{W},\bm{M})
\end{equation}
\vspace{-4mm}

\noindent where $\bm{W}$ denotes all the trainable parameters of neural networks in the 3D reconstruction pipeline, including $f_V^1,f_V^2,f_D^1,f_D^2,f_L^1,f_L^2,f_A^1,f_A^2$. $\mathcal{L}_{\text{train}}$ is the loss $\mathcal{L}$ computed on the training set, and $\mathcal{L}_{\text{val}}$ is the loss $\mathcal{L}$ computed on the validation set. Optionally, $\mathcal{L}_{\text{val}}$ may also include other supervised losses (\eg, ground truth depth) if available. This is in general a difficult problem, and in order to solve it effectively, we propose different algorithms based on the properties of the feasible set $\mathcal{M}_{\text{DAG}}$. After $\bm{M}$ is learned, we will fix $\bm{M}$ and retrain the network.

\vspace{-3.9mm}
\subsection{Learning Dense SCR via Bayesian Optimization}\label{dense_bo}
\vspace{-1mm}

The adjacency matrix $\bm{M}$ for dense SCR is an all-one strictly upper triangular matrix after proper permutation. Therefore, we are essentially learning the ordering permutation which is a discrete and non-differentiable structure. We resort to Bayesian optimization (BO)~\cite{frazier2018tutorial} that is designed for gradient-free and ``expensive to evaludate'' optimization.
Specifically, BO first places a Gaussian process prior on $\mathcal{L}_{\text{val}}(\bm{W}^*(\bm{M}),\bm{M})$ in Eq.~\eqref{opt_obj} and collect all the evaluated points on $\bm{M}$. Then BO updates posterior probability distribution on $\mathcal{L}_{\text{val}}$ using all available data and evaluates $\mathcal{L}_{\text{val}}$ on the maximizer point of the acquisition function which is computed with the current posterior distribution. Note that, evaluation on $\mathcal{L}_{\text{val}}$ requires computing $\bm{W}^*(\bm{M})$. Finally, BO outputs the latest evaluated $\bm{M}$. We use the position permutation kernel $K(\pi_1,\pi_2|\lambda)=\exp(-\lambda\cdot\sum_i |\pi^{-1}_1(i)-\pi^{-1}_2(i)|)$ where $\pi$ is a permutation mapping that maps the original index to the permuted index. We use the expected improvement as the acquisition function.  We note a special advantage of BO over gradient-dependent methods: the validation metric can be obtained from user study, which is often more reliable and flexible.

\vspace{-3.9mm}
\subsection{Learning Generic SCR via Optimization Unrolling}
\vspace{-1mm}

To solve the bi-level optimization in Eq.~\eqref{opt_obj}, we can unroll the inner optimization with a few gradient updates and replace $\bm{W}^*(\bm{M})$ with $\bm{W}-\eta\nabla_{\bm{W}}\mathcal{L}_{\text{train}}(\bm{W},\bm{M})$. Then the optimization becomes $\min_{\bm{M}\in \mathcal{M}_{\text{DAG}}}\mathcal{L}_{\text{val}}(\bm{W}-\eta\nabla_{\bm{W}}\mathcal{L}_{\text{train}}(\bm{W},\bm{M}),\bm{M})$. Here we unroll 1-step gradient update as an example, but we can also unroll multiple steps for better performance in practice. In order to constrain the adjacency matrix $\bm{M}$ to be a DAG, we can turn the feasible set $\bm{M}\in\mathcal{M}_{\text{DAG}}$ into a constraint~\cite{zheng2018dags}: $\mathcal{H}(\bm{M})=\text{tr}((\bm{I}_n+\frac{c}{n}\bm{M}\circ\bm{M})^n)-n=0$ where $\bm{I}_n$ is an identity matrix of size $n$, $c$ is some arbitrary positive number, $n$ is the number of latent factors (here $n=4$) and $\circ$ denotes the element-wise multiplication. Using Lagrangian multiplier method, we end up with the following optimization:

\vspace{-2.25mm}
\begin{equation}\label{unroll_obj}
\small
    \min_{\bm{M}}\mathcal{L}_{\text{val}}(\bm{W}-\eta\nabla_{\bm{W}}\mathcal{L}_{\text{train}}(\bm{W},\bm{M}),\bm{M})+\lambda_{\text{DAG}}\mathcal{H}(\bm{M})
\end{equation}
\vspace{-3.5mm}

\noindent where $\lambda_{\text{DAG}}$ is a hyperparameter. Alternatively, we may use the augmented Lagrangian method for stronger regularization \cite{zheng2018dags,yu2019dag}. Although Eq.~\eqref{unroll_obj} is easy to optimize, it is still difficult to guarantee the learned $\bm{M}$ to be a strict DAG and the search space may also be too large. To address this, we further propose a different approach to learn $\bm{M}$. The basic idea is to learn generic SCR based on the solution from dense SCR. We simply need to relearn/remove some edges for the given dense ordering. The final optimization is given by

\vspace{-2.45mm}
\begin{equation}\label{unroll_obj_dense}
\small
\min_{\bm{M}}\mathcal{L}_{\text{val}}(\bm{W}-\eta\nabla_{\bm{W}}\mathcal{L}_{\text{train}}(\bm{W},\bm{M}\circ \bm{M}^*_{\text{dense}}),\bm{M}\circ \bm{M}^*_{\text{dense}})
\end{equation}
\vspace{-3.05mm}

\noindent where $\bm{M}^*_{\text{dense}}$ is obtained from BO for dense SCR. It is a binary matrix that can be permuted to be strictly upper triangular. If we also constrain $\bm{M}$ to be binary, we will use a preset threshold to binarize the obtained $\bm{M}$ before retraining.

\vspace{-3.9mm}
\subsection{Learning Dynamic SCR via Masked Self-Attention}
\vspace{-1mm}

In order to make the adjacency matrix $\bm{M}$ be adaptively dependent on the input, we need to turn $\bm{M}$ into the output of a function that takes the image $\bm{I}$ as input, \ie, $\bm{M}=\Phi(\bm{I})$. One sensible choice is to parameterize $\Phi(\cdot)$ with an additional neural network, but it will inevitably introduce significantly more parameters and increase the capacity of the framework, making it unfair to compare with the other variants. Therefore, we take a different route by utilizing self-attention to design $\Phi(\cdot)$. Specifically, we use $\bm{M}=\Phi(\bm{I})=q(\bm{u})\circ\bm{M}^*_{\text{dense}}$ where $\bm{u}$ is the matrix containing all the factor embeddings ($\bm{u}=[\bm{u}_V;\bm{u}_D;\bm{u}_L;\bm{u}_A]$), $q(\bm{u})$ can be either the Sigmoid activation $\sigma(\bm{u}\bm{u}^\top)$ or cosine cross-similarity matrix among $\bm{u}_V,\bm{u}_D,\bm{u}_L,\bm{u}_A$ (\ie, $q(\bm{u})_{i,j}=\frac{\langle\bm{u}_i,\bm{u}_j\rangle}{\|\bm{u}_i\|\|\bm{u}_j\|},~i,j\in\{V,D,L,A\}$), and $\bm{M}^*_{\text{dense}}$ is the solution obtained from BO for dense SCR. $\bm{M}^*_{\text{dense}}$ essentially serves as a mask for the self-attention such that the resulting causal ordering is guaranteed to be a DAG. Since there is no fixed $\bm{M}$, the entire pipeline is trained with the final objective function:  $\min_{\bm{W}}\mathcal{L}_{\text{train}}(\bm{W},q(\bm{u})\circ\bm{M}^*_{\text{dense}})$ in an end-to-end fashion. We note that the function $q(\bm{u})$ has no additional parameters and meanwhile makes the causal ordering (\ie, $\Phi(\bm{I})$) dynamically dependent on the input image $\bm{I}$.

\vspace{-3.8mm}
\subsection{Insights and Discussion}
\vspace{-1mm}

\noindent\textbf{Connection to neural architecture search}. We discover an intriguing connection between SCR and neural architecture search (NAS)~\cite{zoph2017neural,elsken2019neural,liu2019darts}. SCR can be viewed as a special case of NAS that operates on a semantically interpretable space (\ie, the dependency structure among latent factors), while standard NAS does not necessarily produce an interpretable architecture. SCR performs like a top-down NAS where a specific neural structure is derived from semantic dependency/causality and largely constrains the search space for neural networks without suffering from countless poor local minima like NAS does.

\vspace{1mm}
\noindent\textbf{Semantic decoupling in common embeddings}. In order to make SCR interpretable, we require the latent embeddings $\bm{u}_V,\bm{u}_D,\bm{u}_L,\bm{u}_A$ to be semantically decoupled. For example, $\bm{u}_V$ should contain sufficient information to decode $\bm{V}$. The semantic decoupling in the common embedding space can indeed be preserved. First, the DAG constraint can naturally encourage semantic decoupling. We take an arbitrary dense ordering (\eg, DAVL) as an example. $\bm{u}_D$ is the only input for $f^2_D$, so it contain sufficient information for $\bm{D}$. $\bm{u}_D,\bm{A}$ are the inputs for $f^2_A$, so the information of $\bm{A}$ will be largely encoded in $\bm{u}_A$ ($\bm{u}_D$ already encodes the information of $\bm{D}$). The same reasoning applies to $\bm{V}$ and $\bm{L}$. Note that, a generic DAG will have less decoupling than dense ordering due to less number of directed edges. Second, we enforce the encoders $f_V^2,f_D^2,f_L^2,f_A^2$ to be relatively simple functions (\eg, shallow neural networks), such that they are unable to encode too much additional information and mostly serve as dimensionality transformation. They could also be constrained to be invertible. Both mechanisms ensure the semantic decoupling in the common embedding space.

\vspace{-3.5mm}
\section{Experiments and Results}
\vspace{-1.8mm}

\noindent\textbf{Datasets}. We evaluate our method on two human face datasets (CelebA~\cite{liu2015deep} and BFM~\cite{paysan20093d}), one cat face dataset that combines \cite{zhang2008cat} and \cite{parkhi2012cats} (cropped by \cite{wu2020unsupervised}) and one car dataset~\cite{wu2020unsupervised} rendered from ShapeNet~\cite{chang2015shapenet} with random viewpoints and illumination. These images are split 8:1:1 into training, validation and testing.

\vspace{1mm}

\noindent\textbf{Metrics}. For fairness, we use the same metrics as \cite{wu2020unsupervised}. The first one is Scale Invariant Depth Error (SIDE)~\cite{eigen2014depth} which computes the standard deviation of the difference between the estimated depth map at the input view and the ground truth depth map at the log scale. We note that this metric may not reflect the true reconstruction quality. As long as this metric is reasonably low, it may no longer be a stronger indicator for reconstruction quality, which is also verified by \cite{ho2021toward}. To make a comprehensive evaluation, we also use another metric: the mean angle deviation (MAD)~\cite{wu2020unsupervised} between normals computed from ground truth depth from the predicted depth. It measures how well the surface is reconstructed.

\vspace{1mm}
\noindent\textbf{Implementation}. For the network architecture, we follow \cite{wu2020unsupervised} and only make essential changes to its setup such that the comparison is meaningful. For the detailed implementation and experimental settings, refer to Appendix~\ref{exp_detail}.

\vspace{-3.7mm}
\subsection{Quantitative Results}
\vspace{-1mm}

\noindent\textbf{Geometry reconstruction}. We train and test all the methods on the BFM dataset to evaluate the depth reconstruction quality. The results are given in Table~\ref{bfm_results}. We compare different variants of SCR with our own baseline (\ie, independent SCR), two state-of-the-art methods~\cite{wu2020unsupervised,ho2021toward}, supervised learning upper bound, constant null depth and average ground truth depth. We note that there is a performance difference between our re-run version and the original version of \cite{wu2020unsupervised}. This is because all our experiments are run under CUDA-10 
\setlength{\columnsep}{8pt}
\begin{wraptable}{r}[0cm]{0pt}
    \scriptsize
	\centering
	\setlength{\tabcolsep}{3pt}
	\renewcommand{\arraystretch}{1.2}
	\renewcommand{\captionlabelfont}{\footnotesize}
	\begin{tabular}{c|cc}
	\specialrule{0em}{0pt}{0pt}
		  Method & SIDE ($\times 10^{-2}$) $\downarrow$ & MAD (deg.) $\downarrow$\\
		  \shline
		   Supervised & \textbf{0.410} {\scriptsize$\pm$0.103} & \textbf{10.78} {\scriptsize$\pm$1.01} \\
           Constant Null Depth & 2.723 {\scriptsize$\pm$0.371} &  43.34 {\scriptsize$\pm$2.25} \\
           Average GT Depth &  1.990 {\scriptsize$\pm$0.556} &  23.26 {\scriptsize$\pm$2.85} \\\hline
           Wu et al.~\cite{wu2020unsupervised} (reported) & \textbf{0.793} {\scriptsize$\pm$0.140} & 16.51 {\scriptsize$\pm$1.56} \\
           Ho et al.~\cite{ho2021toward} (reported) & 0.834 {\scriptsize$\pm$0.169} & \textbf{15.49} {\scriptsize$\pm$1.50} \\
           Wu et al.~\cite{wu2020unsupervised} (our run) & 0.901 {\scriptsize$\pm$0.190} & 17.53 {\scriptsize$\pm$1.84} \\\hline\rowcolor{Gray}
           Independent SCR & 0.895 {\scriptsize$\pm$0.183} & 17.36 {\scriptsize$\pm$1.78} \\
            \rowcolor{Gray}
           Dense SCR (random) & 1.000 {\scriptsize$\pm$0.275} & 17.66 {\scriptsize$\pm$2.09}  \\\rowcolor{Gray}
           Dense SCR (BO) & \textbf{0.830} {\scriptsize$\pm$0.205} & \textbf{14.88} {\scriptsize$\pm$1.94}  \\\hline\rowcolor{Gray}
		   Generic SCR (Eq.~\ref{unroll_obj}) & 0.859 {\scriptsize$\pm$0.215} & 15.17 {\scriptsize$\pm$1.92}  \\\rowcolor{Gray}
		   Generic SCR (Eq.~\ref{unroll_obj_dense}) & \textbf{0.820} {\scriptsize$\pm$0.190} & \textbf{14.79} {\scriptsize$\pm$1.96}  \\\hline\rowcolor{Gray}
		   Dynamic SCR (Sigmoid) & 0.827 {\scriptsize$\pm$0.220} & 14.86 {\scriptsize$\pm$2.02}  \\\rowcolor{Gray}
		   Dynamic SCR (Cosine) & \textbf{0.815} {\scriptsize$\pm$0.232} & \textbf{14.80} {\scriptsize$\pm$1.95}  \\
		  \specialrule{0em}{-5pt}{0pt}
	\end{tabular}
	\caption{\footnotesize Depth reconstruction results on BFM.} \label{bfm_results}
\vspace{-1mm}
\end{wraptable}

\vspace{-4.2mm}
\noindent while the original version of \cite{wu2020unsupervised} is trained on CUDA-9. We also re-train our models on CUDA-9 and observe a similar performance boost (see Appendix~\ref{exp_cuda9}). We suspect this is because of the rendering precision on different CUDA versions. However, this will not affect the advantages of our method and our experiment settings are the same for all the other compared methods. More importantly, we build our SCR on a baseline that performs similarly to \cite{wu2020unsupervised} (independent SCR vs. our version of \cite{wu2020unsupervised}). SCR improves our baseline for more than $0.0065$ on SIDE and $2.5$ degree on MAD. Specifically, our dense SCR learns an ordering of depth-viewpoint-albedo-lighting. Generic SCR (Eq.~\ref{unroll_obj_dense}) and both dynamic SCR variants are built upon this ordering. We notice that if we use a random dense ordering, then the 3D reconstruction results are even worse than our baseline, which shows that dense SCR can indeed learn crucial structures. Such a significant performance gain shows that a suitable SCR can implicitly regularize the neural networks and thus benefit the 3D reconstruction.

\vspace{1.5mm}

\setlength{\columnsep}{6pt}
\begin{wraptable}{r}[0cm]{0pt}
    \scriptsize
	\centering
	\setlength{\tabcolsep}{3pt}
	\renewcommand{\arraystretch}{1.3}
	\renewcommand{\captionlabelfont}{\footnotesize}
	\begin{tabular}{c|ccc}
	\specialrule{0em}{0pt}{-22pt}
		  Method & DLVA & DAVL & DVAL\\
		  \shline
		   Dense SCR (fixed) & 15.02 {\scriptsize$\pm$2.00} & 15.14 {\scriptsize$\pm$1.91} & 14.88 {\scriptsize$\pm$1.94}  \\
		   G-SCR (Eq.~\ref{unroll_obj_dense}) & \textbf{14.96} {\scriptsize$\pm$1.90} & \textbf{14.85} {\scriptsize$\pm$2.13} & \textbf{14.79} {\scriptsize$\pm$1.96} \\
		   Dy-SCR (Sigmoid) & 15.01 {\scriptsize$\pm$1.99} & 15.03 {\scriptsize$\pm$2.12} & 14.86 {\scriptsize$\pm$2.02}  \\
		   Dy-SCR (Cosine) & 14.99 {\scriptsize$\pm$1.93} & 15.05 {\scriptsize$\pm$2.15} & 14.80 {\scriptsize$\pm$1.95}  \\
		  \specialrule{0em}{-6pt}{0pt}
	\end{tabular}
	\caption{\footnotesize MAD (degree) results on BFM.} \label{ablation}
\vspace{-4mm}
\end{wraptable}

\noindent\textbf{Effect of dense ordering}. We also perform ablation study to see how generic SCR and dynamic SCR perform if they are fed with different dense orderings. Table~\ref{ablation} compares three different dense orderings: depth-light-viewpoint-albedo (DLVA), depth-albedo-viewpoint-lighting (DAVL) and depth-viewpoint-albedo-lighting (DVAL). We show that G-SCR and D-SCR can consistently improve the 3D reconstruction results even if different dense orderings are given as the mask.

\vspace{1.5mm}

\setlength{\columnsep}{10pt}
\begin{wrapfigure}{r}{0.643\textwidth}
  \begin{center}
  \advance\leftskip+1mm
  \renewcommand{\captionlabelfont}{\footnotesize}
    \vspace{-0.48in}  
    \includegraphics[width=0.646\textwidth]{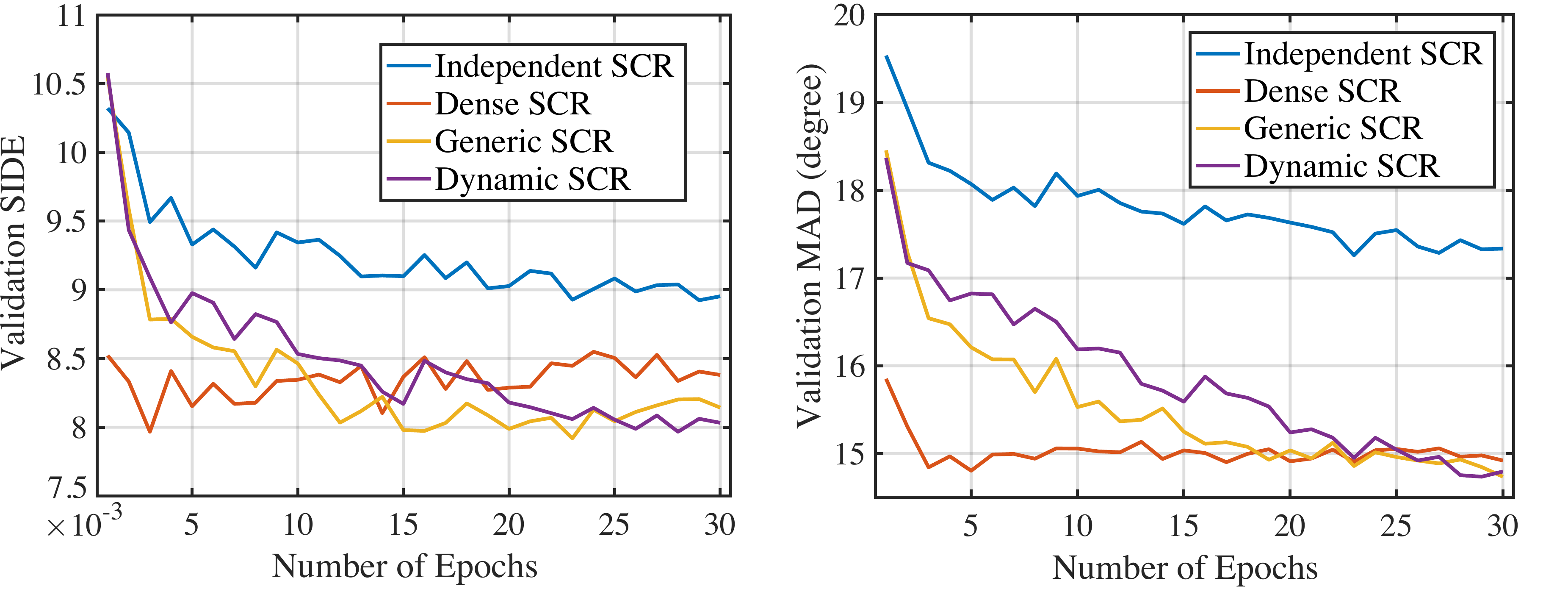}
    \vspace{-0.265in} 
    \caption{\footnotesize Convergence curves of validation SIDE \& MAD.}\label{convergence}
    \vspace{-0.35in} 
  \end{center}
\end{wrapfigure}

\noindent\textbf{Convergence}. Fig.~\ref{convergence} plots the convergence curves of both SIDE and MAD in Fig.~\ref{convergence}. We observe that dense SCR, generic SCR and dynamic SCR converge much faster than the baseline. Dense SCR achieves impressive performance at the very beginning of the training. When converged, dynamic SCR and generic SCR performs better than dense SCR due to its modeling flexibility.

\begin{figure}[t]
  \renewcommand{\captionlabelfont}{\footnotesize}
  \setlength{\abovecaptionskip}{3.5pt}
  \setlength{\belowcaptionskip}{-1.5pt}
  \centering
  \vspace{-2mm}
  \includegraphics[width=4.65in]{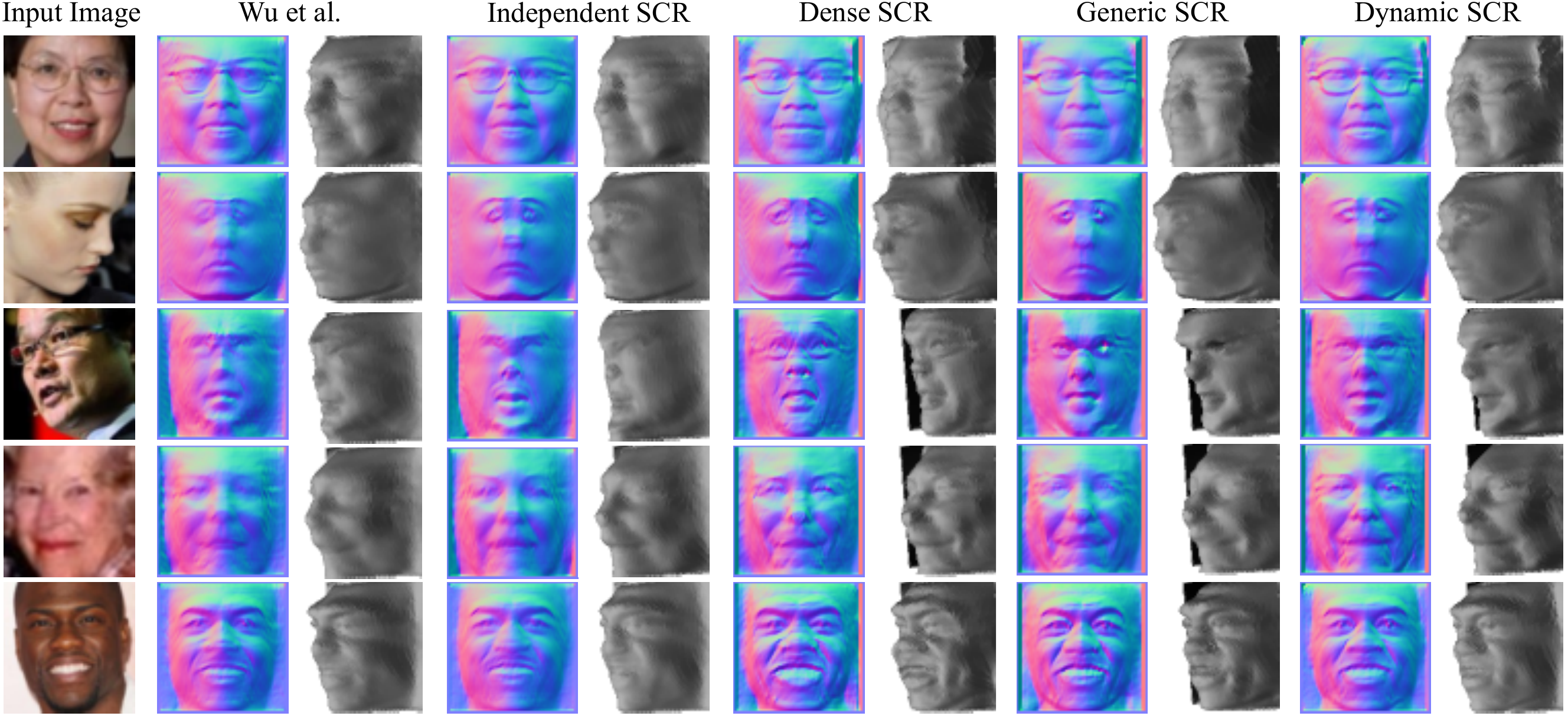}
  \caption{\footnotesize One textureless view and canonical normal map on CelebA. All the methods (including  Wu et al.~\cite{wu2020unsupervised}) are trained on CelebA under the same experimental settings.}\label{celeb_img}
\end{figure}

\begin{figure}[t]
  \renewcommand{\captionlabelfont}{\footnotesize}
  \setlength{\abovecaptionskip}{3.25pt}
  \setlength{\belowcaptionskip}{-15.25pt}
  \centering
  \includegraphics[width=4.65in]{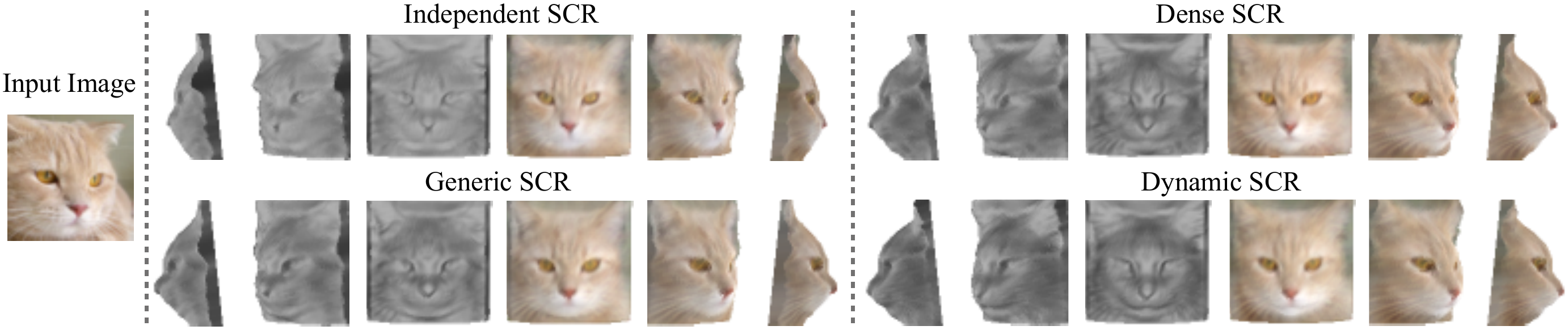}
  \caption{\footnotesize Textured and textureless shapes from multiple views on cat faces.}\label{cat_img}
\end{figure}

\vspace{-5.25mm}
\subsection{Qualitative Results}
\vspace{-1.35mm}

\noindent\textbf{CelebA}. We show the reconstruction results for a few challenging in-the-wild face images (\eg, extreme poses and expressions) in Fig.~\ref{celeb_img}. We train dense SCR with BO and the other SCR variants are trained based on the best learned dense ordering. Our SCR variants including dense SCR, generic SCR and dynamic SCR are able to reconstruct fine-grained geometric details and recover more realistic shapes than both \cite{wu2020unsupervised} and our independent baseline.
This well verifies the importance of implicit regularization from latent space structure.

\vspace{1mm}

\setlength{\columnsep}{9pt}
\begin{wrapfigure}{r}{0.49\textwidth}
  \begin{center}
  \advance\leftskip+1mm
  \renewcommand{\captionlabelfont}{\footnotesize}
    \vspace{-0.46in}  
    \includegraphics[width=0.488\textwidth]{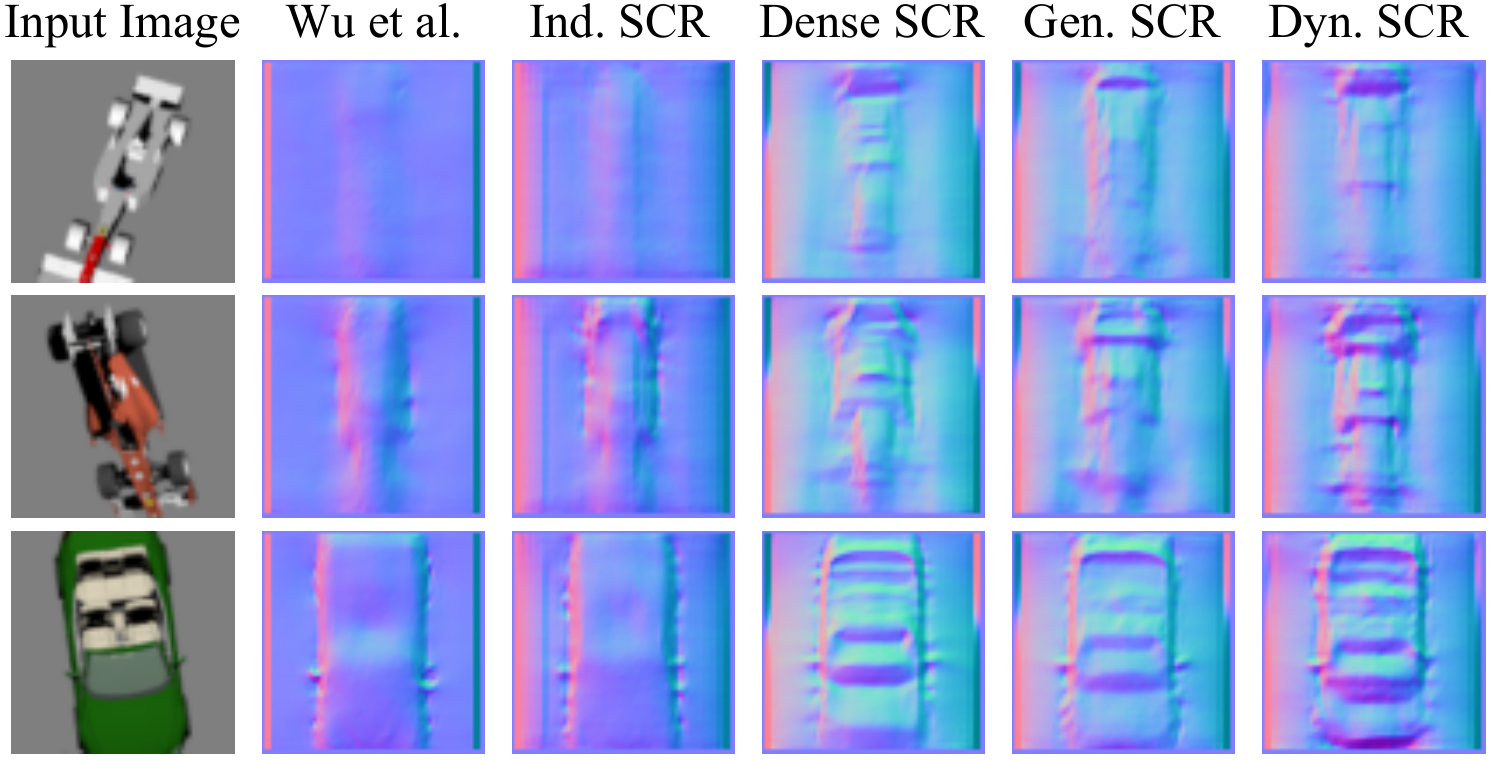}
    \vspace{-0.25in} 
    \caption{\footnotesize Canonical normal maps on cars.}\label{car_img}
    \vspace{-0.4in} 
  \end{center}
\end{wrapfigure}
\noindent\textbf{Cat faces}. We also train all the SCR variants on cat faces. Results in Fig.~\ref{cat_img} show that dynamic SCR yields the best 3D reconstruction quality, while dense SCR and generic SCR can also recover reasonably good geometric details.

\vspace{1mm}
\noindent\textbf{Cars}. We train all the methods on the synthetic car dataset under the same settings, and then evaluate these methods on car images with abundant geometric details. Fig.~\ref{car_img} shows that our SCR variants recovers very fine-grained geometric details and produce highly realistic normal maps which are significantly better than both \cite{wu2020unsupervised} and independent SCR.

\vspace{1.1mm}
{
\begin{spacing}{0.2}
\tiny\noindent\textbf{Acknowledgements}. This work was supported by the German Federal Ministry of Education and Research (BMBF): Tübingen AI Center, FKZ: 01IS18039A, 01IS18039B; and by the Machine Learning Cluster of Excellence, EXC number 2064/1 – Project number 390727645. AW is supported by a Turing AI Fellowship under grant EP/V025279/1, The Alan Turing Institute, and the Leverhulme Trust via CFI. LP is supported by the CIFAR CCAI Chairs Program.
\end{spacing}
}


%
%
\clearpage
\bibliographystyle{splncs04}
\bibliography{egbib}

\begin{thebibliography}{10}
\providecommand{\url}[1]{\texttt{#1}}
\providecommand{\urlprefix}{URL }
\providecommand{\doi}[1]{https://doi.org/#1}

\bibitem{agrawal2015learning}
Agrawal, P., Carreira, J., Malik, J.: Learning to see by moving. In: ICCV
  (2015)

\bibitem{albiero2021img2pose}
Albiero, V., Chen, X., Yin, X., Pang, G., Hassner, T.: img2pose: Face alignment
  and detection via 6dof, face pose estimation. In: CVPR (2021)

\bibitem{besserve2018intrinsic}
Besserve, M., Sun, R., Sch{\"o}lkopf, B.: Intrinsic disentanglement: an
  invariance view for deep generative models. In: ICML 2018 Workshop on
  Theoretical Foundations and Applications of Deep Generative Models (2018)

\bibitem{blanz1999morphable}
Blanz, V., Vetter, T.: A morphable model for the synthesis of 3d faces. In:
  Proceedings of the 26th annual conference on Computer graphics and
  interactive techniques (1999)

\bibitem{chang2015shapenet}
Chang, A.X., Funkhouser, T., Guibas, L., Hanrahan, P., Huang, Q., Li, Z.,
  Savarese, S., Savva, M., Song, S., Su, H., et~al.: Shapenet: An
  information-rich 3d model repository. arXiv preprint arXiv:1512.03012  (2015)

\bibitem{chen2019unsupervised}
Chen, C.H., Tyagi, A., Agrawal, A., Drover, D., Stojanov, S., Rehg, J.M.:
  Unsupervised 3d pose estimation with geometric self-supervision. In: CVPR
  (2019)

\bibitem{chen2019learning}
Chen, W., Ling, H., Gao, J., Smith, E., Lehtinen, J., Jacobson, A., Fidler, S.:
  Learning to predict 3d objects with an interpolation-based differentiable
  renderer. NeurIPS  (2019)

\bibitem{choutas2020monocular}
Choutas, V., Pavlakos, G., Bolkart, T., Tzionas, D., Black, M.J.: Monocular
  expressive body regression through body-driven attention. In: ECCV (2020)

\bibitem{choy20163d}
Choy, C.B., Xu, D., Gwak, J., Chen, K., Savarese, S.: 3d-r2n2: A unified
  approach for single and multi-view 3d object reconstruction. In: ECCV (2016)

\bibitem{eigen2014depth}
Eigen, D., Puhrsch, C., Fergus, R.: Depth map prediction from a single image
  using a multi-scale deep network. In: NeurIPS (2014)

\bibitem{elsken2019neural}
Elsken, T., Metzen, J.H., Hutter, F.: Neural architecture search: A survey. The
  Journal of Machine Learning Research  \textbf{20}(1),  1997--2017 (2019)

\bibitem{fahim2021single}
Fahim, G., Amin, K., Zarif, S.: Single-view 3d reconstruction: a survey of deep
  learning methods. Computers \& Graphics  \textbf{94},  164--190 (2021)

\bibitem{fan2017point}
Fan, H., Su, H., Guibas, L.J.: A point set generation network for 3d object
  reconstruction from a single image. In: CVPR (2017)

\bibitem{feng2021collaborative}
Feng, Y., Choutas, V., Bolkart, T., Tzionas, D., Black, M.J.: Collaborative
  regression of expressive bodies using moderation. In: 3DV (2021)

\bibitem{feng2021learning}
Feng, Y., Feng, H., Black, M.J., Bolkart, T.: Learning an animatable detailed
  3d face model from in-the-wild images. ACM Transactions on Graphics (TOG)
  (2021)

\bibitem{franccois2003mirror}
Fran{\c{c}}ois, A.R., Medioni, G.G., Waupotitsch, R.: Mirror symmetry => 2-view
  stereo geometry. Image and Vision Computing  \textbf{21}(2),  137--143 (2003)

\bibitem{frazier2018tutorial}
Frazier, P.I.: A tutorial on bayesian optimization. arXiv preprint
  arXiv:1807.02811  (2018)

\bibitem{fuentes2015visual}
Fuentes-Pacheco, J., Ruiz-Ascencio, J., Rend{\'o}n-Mancha, J.M.: Visual
  simultaneous localization and mapping: a survey. Artificial intelligence
  review  \textbf{43}(1),  55--81 (2015)

\bibitem{gecer2019ganfit}
Gecer, B., Ploumpis, S., Kotsia, I., Zafeiriou, S.: Ganfit: Generative
  adversarial network fitting for high fidelity 3d face reconstruction. In:
  CVPR (2019)

\bibitem{gerig2018morphable}
Gerig, T., Morel-Forster, A., Blumer, C., Egger, B., Luthi, M., Sch{\"o}nborn,
  S., Vetter, T.: Morphable face models-an open framework. In: 2018 13th IEEE
  International Conference on Automatic Face \& Gesture Recognition (FG 2018)
  (2018)

\bibitem{girdhar2016learning}
Girdhar, R., Fouhey, D.F., Rodriguez, M., Gupta, A.: Learning a predictable and
  generative vector representation for objects. In: ECCV (2016)

\bibitem{green2003spherical}
Green, R.: Spherical harmonic lighting: The gritty details. In: Archives of the
  game developers conference. vol.~56, p.~4 (2003)

\bibitem{gwak2017weakly}
Gwak, J., Choy, C.B., Chandraker, M., Garg, A., Savarese, S.: Weakly supervised
  3d reconstruction with adversarial constraint. In: 3DV (2017)

\bibitem{he2016deep}
He, K., Zhang, X., Ren, S., Sun, J.: Deep residual learning for image
  recognition. In: CVPR (2016)

\bibitem{henderson2018learning}
Henderson, P., Ferrari, V.: Learning to generate and reconstruct 3d meshes with
  only 2d supervision. arXiv preprint arXiv:1807.09259  (2018)

\bibitem{henderson2020learning}
Henderson, P., Ferrari, V.: Learning single-image 3d reconstruction by
  generative modelling of shape, pose and shading. IJCV  \textbf{128}(4),
  835--854 (2020)

\bibitem{ho2021toward}
Ho, L.N., Tran, A.T., Phung, Q., Hoai, M.: Toward realistic single-view 3d
  object reconstruction with unsupervised learning from multiple images. In:
  ICCV (2021)

\bibitem{horn1989shape}
Horn, B.K., Brooks, M.J.: Shape from shading. MIT press (1989)

\bibitem{hu2021self}
Hu, T., Wang, L., Xu, X., Liu, S., Jia, J.: Self-supervised 3d mesh
  reconstruction from single images. In: CVPR (2021)

\bibitem{kanazawa2018end}
Kanazawa, A., Black, M.J., Jacobs, D.W., Malik, J.: End-to-end recovery of
  human shape and pose. In: CVPR (2018)

\bibitem{kanazawa2018learning}
Kanazawa, A., Tulsiani, S., Efros, A.A., Malik, J.: Learning category-specific
  mesh reconstruction from image collections. In: ECCV (2018)

\bibitem{kar2017learning}
Kar, A., H{\"a}ne, C., Malik, J.: Learning a multi-view stereo machine. In:
  NIPS (2017)

\bibitem{kato2019learning}
Kato, H., Harada, T.: Learning view priors for single-view 3d reconstruction.
  In: CVPR (2019)

\bibitem{kato2018neural}
Kato, H., Ushiku, Y., Harada, T.: Neural 3d mesh renderer. In: CVPR (2018)

\bibitem{kilbertus2018generalization}
Kilbertus, N., Parascandolo, G., Sch{\"o}lkopf, B.: Generalization in
  anti-causal learning. arXiv preprint arXiv:1812.00524  (2018)

\bibitem{koenderink1984does}
Koenderink, J.J.: What does the occluding contour tell us about solid shape?
  Perception  \textbf{13}(3),  321--330 (1984)

\bibitem{Leeb-SAE}
Leeb, F., Lanzillotta, G., Annadani, Y., Besserve, M., Bauer, S.,
  Sch{\"o}lkopf, B.: Structure by architecture: Disentangled representations
  without regularization. arXiv preprint arXiv:2006.07796  (2020)

\bibitem{li2017learning}
Li, T., Bolkart, T., Black, M.J., Li, H., Romero, J.: Learning a model of
  facial shape and expression from 4d scans. ACM Transactions on Graphics
  \textbf{36}(6),  1--17 (2017)

\bibitem{li2020self}
Li, X., Liu, S., Kim, K., De~Mello, S., Jampani, V., Yang, M.H., Kautz, J.:
  Self-supervised single-view 3d reconstruction via semantic consistency. In:
  ECCV (2020)

\bibitem{liu2019darts}
Liu, H., Simonyan, K., Yang, Y.: Darts: Differentiable architecture search. In:
  ICLR (2019)

\bibitem{liu2019soft}
Liu, S., Li, T., Chen, W., Li, H.: Soft rasterizer: A differentiable renderer
  for image-based 3d reasoning. In: ICCV (2019)

\bibitem{liu2022sphereface}
Liu, W., Wen, Y., Raj, B., Singh, R., Weller, A.: Sphereface revived: Unifying
  hyperspherical face recognition. TPAMI  (2022)

\bibitem{liu2017sphereface}
Liu, W., Wen, Y., Yu, Z., Li, M., Raj, B., Song, L.: Sphereface: Deep
  hypersphere embedding for face recognition. In: CVPR (2017)

\bibitem{liu2015deep}
Liu, Z., Luo, P., Wang, X., Tang, X.: Deep learning face attributes in the
  wild. In: ICCV (2015)

\bibitem{loper2015smpl}
Loper, M., Mahmood, N., Romero, J., Pons-Moll, G., Black, M.J.: Smpl: A skinned
  multi-person linear model. ACM Transactions on Graphics  \textbf{34}(6),
  1--16 (2015)

\bibitem{mukherjee1995shape}
Mukherjee, D.P., Zisserman, A.P., Brady, M., Smith, F.: Shape from symmetry:
  Detecting and exploiting symmetry in affine images. Philosophical
  Transactions of the Royal Society of London. Series A: Physical and
  Engineering Sciences  \textbf{351}(1695),  77--106 (1995)

\bibitem{murphy2002dynamic}
Murphy, K.P.: Dynamic bayesian networks: representation, inference and
  learning. University of California, Berkeley (2002)

\bibitem{novotny2017learning}
Novotny, D., Larlus, D., Vedaldi, A.: Learning 3d object categories by looking
  around them. In: ICCV (2017)

\bibitem{ozyesil2017survey}
Ozyesil, O., Voroninski, V., Basri, R., Singer, A.: A survey of structure from
  motion. arXiv preprint arXiv:1701.08493  (2017)

\bibitem{pan2019deep}
Pan, J., Han, X., Chen, W., Tang, J., Jia, K.: Deep mesh reconstruction from
  single rgb images via topology modification networks. In: ICCV (2019)

\bibitem{parkhi2012cats}
Parkhi, O.M., Vedaldi, A., Zisserman, A., Jawahar, C.: Cats and dogs. In: CVPR
  (2012)

\bibitem{paysan20093d}
Paysan, P., Knothe, R., Amberg, B., Romdhani, S., Vetter, T.: A 3d face model
  for pose and illumination invariant face recognition. In: 2009 sixth IEEE
  international conference on advanced video and signal based surveillance. pp.
  296--301. Ieee (2009)

\bibitem{pearl2009causality}
Pearl, J.: Causality. Cambridge university press (2009)

\bibitem{phong1975illumination}
Phong, B.T.: Illumination for computer generated pictures. Communications of
  the ACM  \textbf{18}(6),  311--317 (1975)

\bibitem{SchJanPetSgoetal12}
Sch{\"o}lkopf, B., Janzing, D., Peters, J., Sgouritsa, E., Zhang, K., Mooij,
  J.: On causal and anticausal learning. In: Langford, J., Pineau, J. (eds.)
  Proceedings of the 29th International Conference on Machine Learning (ICML).
  pp. 1255--1262. Omnipress, New York, NY, USA (2012),
  \url{http://icml.cc/2012/papers/625.pdf}

\bibitem{scholkopf2021toward}
Sch{\"o}lkopf, B., Locatello, F., Bauer, S., Ke, N.R., Kalchbrenner, N., Goyal,
  A., Bengio, Y.: Toward causal representation learning. Proceedings of the
  IEEE  \textbf{109}(5),  612--634 (2021)

\bibitem{schroff2015facenet}
Schroff, F., Kalenichenko, D., Philbin, J.: Facenet: A unified embedding for
  face recognition and clustering. In: CVPR (2015)

\bibitem{shen2020disentangled}
Shen, X., Liu, F., Dong, H., Lian, Q., Chen, Z., Zhang, T.: Disentangled
  generative causal representation learning. arXiv preprint arXiv:2010.02637
  (2020)

\bibitem{simonyan2014very}
Simonyan, K., Zisserman, A.: Very deep convolutional networks for large-scale
  image recognition. arXiv preprint arXiv:1409.1556  (2014)

\bibitem{sinha2012detecting}
Sinha, S.N., Ramnath, K., Szeliski, R.: Detecting and reconstructing 3d mirror
  symmetric objects. In: ECCV (2012)

\bibitem{suwajanakorn2018discovery}
Suwajanakorn, S., Snavely, N., Tompson, J.J., Norouzi, M.: Discovery of latent
  3d keypoints via end-to-end geometric reasoning. NeurIPS  (2018)

\bibitem{tewari2017mofa}
Tewari, A., Zollhofer, M., Kim, H., Garrido, P., Bernard, F., Perez, P.,
  Theobalt, C.: Mofa: Model-based deep convolutional face autoencoder for
  unsupervised monocular reconstruction. In: ICCV (2017)

\bibitem{thrun2005shape}
Thrun, S., Wegbreit, B.: Shape from symmetry. In: ICCV (2005)

\bibitem{tulsiani2017multi}
Tulsiani, S., Zhou, T., Efros, A.A., Malik, J.: Multi-view supervision for
  single-view reconstruction via differentiable ray consistency. In: CVPR
  (2017)

\bibitem{vowels2021d}
Vowels, M.J., Camgoz, N.C., Bowden, R.: D'ya like dags? a survey on structure
  learning and causal discovery. arXiv preprint arXiv:2103.02582  (2021)

\bibitem{wang2018learning}
Wang, C., Buenaposada, J.M., Zhu, R., Lucey, S.: Learning depth from monocular
  videos using direct methods. In: CVPR (2018)

\bibitem{wang2019adversarial}
Wang, M., Shu, Z., Cheng, S., Panagakis, Y., Samaras, D., Zafeiriou, S.: An
  adversarial neuro-tensorial approach for learning disentangled
  representations. IJCV  \textbf{127}(6),  743--762 (2019)

\bibitem{wang2018pixel2mesh}
Wang, N., Zhang, Y., Li, Z., Fu, Y., Liu, W., Jiang, Y.G.: Pixel2mesh:
  Generating 3d mesh models from single rgb images. In: ECCV (2018)

\bibitem{wang2018exponential}
Wang, Q., et~al.: Exponential convergence of the deep neural network
  approximation for analytic functions. arXiv preprint arXiv:1807.00297  (2018)

\bibitem{weichwald2014causal}
Weichwald, S., Sch{\"o}lkopf, B., Ball, T., Grosse-Wentrup, M.: Causal and
  anti-causal learning in pattern recognition for neuroimaging. In:
  International Workshop on Pattern Recognition in Neuroimaging (2014)

\bibitem{wen2019pixel2mesh++}
Wen, C., Zhang, Y., Li, Z., Fu, Y.: Pixel2mesh++: Multi-view 3d mesh generation
  via deformation. In: ICCV (2019)

\bibitem{wen2021self}
Wen, Y., Liu, W., Raj, B., Singh, R.: Self-supervised 3d face reconstruction
  via conditional estimation. In: ICCV (2021)

\bibitem{wiles2017silnet}
Wiles, O., Zisserman, A.: Silnet: Single-and multi-view reconstruction by
  learning from silhouettes. In: BMVC (2017)

\bibitem{witkin1981recovering}
Witkin, A.P.: Recovering surface shape and orientation from texture. Artificial
  intelligence  \textbf{17}(1-3),  17--45 (1981)

\bibitem{wu2020unsupervised}
Wu, S., Rupprecht, C., Vedaldi, A.: Unsupervised learning of probably symmetric
  deformable 3d objects from images in the wild. In: CVPR (2020)

\bibitem{xiang2016objectnet3d}
Xiang, Y., Kim, W., Chen, W., Ji, J., Choy, C., Su, H., Mottaghi, R., Guibas,
  L., Savarese, S.: Objectnet3d: A large scale database for 3d object
  recognition. In: ECCV (2016)

\bibitem{xie2019pix2vox}
Xie, H., Yao, H., Sun, X., Zhou, S., Zhang, S.: Pix2vox: Context-aware 3d
  reconstruction from single and multi-view images. In: ICCV (2019)

\bibitem{yan2016perspective}
Yan, X., Yang, J., Yumer, E., Guo, Y., Lee, H.: Perspective transformer nets:
  Learning single-view 3d object reconstruction without 3d supervision. In:
  NIPS (2016)

\bibitem{yang2021causalvae}
Yang, M., Liu, F., Chen, Z., Shen, X., Hao, J., Wang, J.: Causalvae:
  Disentangled representation learning via neural structural causal models. In:
  CVPR (2021)

\bibitem{yi2019mmface}
Yi, H., Li, C., Cao, Q., Shen, X., Li, S., Wang, G., Tai, Y.W.: Mmface: A
  multi-metric regression network for unconstrained face reconstruction. In:
  CVPR (2019)

\bibitem{yu2019dag}
Yu, Y., Chen, J., Gao, T., Yu, M.: Dag-gnn: Dag structure learning with graph
  neural networks. In: ICML (2019)

\bibitem{zhang1999shape}
Zhang, R., Tsai, P.S., Cryer, J.E., Shah, M.: Shape-from-shading: a survey.
  TPAMI  \textbf{21}(8),  690--706 (1999)

\bibitem{zhang2008cat}
Zhang, W., Sun, J., Tang, X.: Cat head detection-how to effectively exploit
  shape and texture features. In: ECCV (2008)

\bibitem{zheng2018dags}
Zheng, X., Aragam, B., Ravikumar, P.K., Xing, E.P.: Dags with no tears:
  Continuous optimization for structure learning. In: NeurIPS (2018)

\bibitem{zhou2017unsupervised}
Zhou, T., Brown, M., Snavely, N., Lowe, D.G.: Unsupervised learning of depth
  and ego-motion from video. In: CVPR (2017)

\bibitem{zhu2017rethinking}
Zhu, R., Kiani~Galoogahi, H., Wang, C., Lucey, S.: Rethinking reprojection:
  Closing the loop for pose-aware shape reconstruction from a single image. In:
  ICCV (2017)

\bibitem{zoph2017neural}
Zoph, B., Le, Q.V.: Neural architecture search with reinforcement learning. In:
  ICLR (2017)

\end{thebibliography}

\clearpage
\newpage
\appendix
\onecolumn

\begin{center}
    \textbf{\Large Appendix}
\end{center}

\section{Experimental Details}\label{exp_detail}

\subsection{Architectures and settings}


Our architectures generally follow \cite{wu2020unsupervised} with a few additional decoders. The specific architectures are given in Table~\ref{tab:arch_view_light}, Table~\ref{tab:arch_depth_albedo} and Table~\ref{tab:arch_conf}. We use the same training setting as mentioned in \cite{wu2020unsupervised} unless otherwise specified. We learn different variants of SCR on the BFM dataset and use it to the other datasets (CelebA, cat faces and cars) without retraining SCR on these datasets. This still obtains satisfactory performance even if we do not directly learn SCR on these datasets, which show \emph{strong transferrability of SCR}. This also partially justifies that SCR can indeed learn some underlying causal knowledge that can generalize across different domains. However, we want to emphasize that we can still apply SCR on individual datasets and obtain even larger improvement.

\begin{table}[h]
\footnotesize
\centering
\vspace{-2mm}
\begin{tabular}{lc}
\toprule
 Encoder ($f_V^1$ and $f_L^1$) & Output size\\ \midrule
 Conv(3, 32, 4, 2, 1) + ReLU & 32\\
 Conv(32, 64, 4, 2, 1) + ReLU & 16\\
 Conv(64, 128, 4, 2, 1) + ReLU & 8\\
 Conv(128, 256, 4, 2, 1) + ReLU & 4\\
 Conv(256, 256, 4, 1, 0) + ReLU & 1\\
 Conv(256, 256, 1, 1, 0) $\rightarrow output$ & 1\\ \midrule \midrule
 Decoder ($f_V^2$ and $f_L^2$) & Output size \\ \midrule
 MLP(256, 256) + ReLU & 1 \\
 MLP(256, $c_{out}$) + Tanh $\rightarrow output$ & 1\\
\bottomrule
\end{tabular}
\vspace{3mm}
\caption{\footnotesize Encoder ($f_V^1$ and $f_L^1$) and decoder ($f_V^2$ and $f_L^2$) network architecture for viewpoint and lighting. The output channel size $c_{out}$ is $6$ for viewpoint.}\label{tab:arch_view_light}
\vspace{-10mm}
\end{table}

\begin{table}[t]
\footnotesize
\centering
\begin{tabular}{lc}
\toprule
 Encoder ($f_D^1$ and $f_A^1$) & Output size \\ \midrule
 Conv(3, 64, 4, 2, 1) + GN(16) + LReLU(0.2) & 32\\
 Conv(64, 128, 4, 2, 1) + GN(32) + LReLU(0.2) & 16\\
 Conv(128, 256, 4, 2, 1) + GN(64) + LReLU(0.2) & 8\\
 Conv(256, 512, 4, 2, 1) + LReLU(0.2) & 4\\
 Conv(512, 256, 4, 1, 0) + ReLU & 1\\ \midrule \midrule
 Decoder ($f_D^2$ and $f_A^2$) & Output size \\ \midrule
 Deconv(256, 512, 4, 1, 0) + ReLU & 4\\
 Conv(512, 512, 3, 1, 1) + ReLU & 4\\
 Deconv(512, 256, 4, 2, 1) + GN(64) + ReLU & 8\\
 Conv(256, 256, 3, 1, 1) + GN(64) + ReLU & 8\\
 Deconv(256, 128, 4, 2, 1) + GN(32) + ReLU & 16\\
 Conv(128, 128, 3, 1, 1) + GN(32) + ReLU & 16\\
 Deconv(128, 64, 4, 2, 1) + GN(16) + ReLU & 32\\
 Conv(64, 64, 3, 1, 1) + GN(16) + ReLU & 32\\
 Upsample(2) & 64\\
 Conv(64, 64, 3, 1, 1) + GN(16) + ReLU & 64\\
 Conv(64, 64, 5, 1, 2) + GN(16) + ReLU & 64\\
 Conv(64, $c_{out}$, 5, 1, 2) + Tanh $\rightarrow output$ & 64\\
\bottomrule
\end{tabular}
\vspace{3mm}
\caption{\footnotesize Encoder (($f_D^1$ and $f_A^1$)) and decoder ($f_D^2$ and $f_A^2$) network architecture for depth and albedo. The output channel size $c_{out}$ is $1$ for depth and $3$ for albedo.}\label{tab:arch_depth_albedo}
\end{table}

\begin{table}[!t]
\footnotesize
\centering
\vspace{-2mm}
\begin{tabular}{lc}
\toprule
 Encoder & Output size \\ \midrule
 Conv(3, 64, 4, 2, 1) + GN(16) + LReLU(0.2) & 32\\
 Conv(64, 128, 4, 2, 1) + GN(32) + LReLU(0.2) & 16\\
 Conv(128, 256, 4, 2, 1) + GN(64) + LReLU(0.2) & 8\\
 Conv(256, 512, 4, 2, 1) + LReLU(0.2) & 4\\
 Conv(512, 128, 4, 1, 0) + ReLU & 1\\ \midrule \midrule
 Decoder & Output size \\ \midrule
 Deconv(128, 512, 4, 1, 0) + ReLU & 4\\
 Deconv(512, 256, 4, 2, 1) + GN(64) + ReLU & 8\\
 Deconv(256, 128, 4, 2, 1) + GN(32) + ReLU & 16\\
 \enskip \rotatebox[origin=c]{180}{$\Lsh$} Conv(128, 2, 3, 1, 1) + SoftPlus $\rightarrow output$ & 16\\
 Deconv(128, 64, 4, 2, 1) + GN(16) + ReLU & 32\\
 Deconv(64, 64, 4, 2, 1) + GN(16) + ReLU & 64\\
 Conv(64, 2, 5, 1, 2) + SoftPlus $\rightarrow output$ & 64\\
\bottomrule
\end{tabular}
\vspace{3mm}
\caption{\footnotesize Network architecture for confidence maps. The network outputs two pairs of confidence maps at different spatial resolutions for photometric and perceptual losses.}\label{tab:arch_conf}
\vspace{-6mm}
\end{table}

\subsection{Loss Formulation}

We adopt the loss functions from \cite{wu2020unsupervised}. For our paper to be self-contained, we provide the detailed loss formulation here. We note that all the variants including independent SCR, dense SCR, generic SCR and dynamic SCR use the same set of loss functions. Specially, we use
\begin{equation}
    \mathcal{L}=\mathcal{L}_{\text{rec}}(\hat{\bm{I}},\bm{I})+\lambda_f\mathcal{L}_{\text{rec}}(\hat{\bm{I}}',\bm{I})+\lambda_p\mathcal{L}_{\text{p}}(\hat{\bm{I}},\bm{I})
\end{equation}
where $\lambda_f$ and $\lambda_p$ are hyperparameters that weight the loss function. Specifically, we have that
\begin{equation}
    \mathcal{L}_{\text{rec}}(\hat{\bm{I}},\bm{I})=-\frac{1}{|\Omega|}\sum_{u,v\in\Omega}\ln\big(\frac{1}{\sqrt{2}\sigma_{u,v}}\big)\exp\big( -\frac{\sqrt{2}|\hat{\bm{I}}_{u,v}-\bm{I}_{u,v}|}{\sigma_{u,v}} \big)
\end{equation}
where $u,v$ denotes pixel locations and $\sigma$ is the confidence map produced by an additional neural network. This can also be viewed as aleatoric uncertainty. Similarly we also have that
\begin{equation}
    \mathcal{L}_{\text{rec}}(\hat{\bm{I}}',\bm{I})=-\frac{1}{|\Omega|}\sum_{u,v\in\Omega}\ln\big(\frac{1}{\sqrt{2}\sigma'_{u,v}}\big)\exp\big( -\frac{\sqrt{2}|\hat{\bm{I}}'_{u,v}-\bm{I}_{u,v}|}{\sigma'_{u,v}} \big)
\end{equation}
where $\sigma'$ is another confidence map that models the uncertainty of the symmetry, \ie, which part of the image might not be symmetric. Finally we have the perceptual loss for the input and reconstructed image:
\begin{equation}
    \mathcal{L}_{\text{p}}(\hat{\bm{I}},\bm{I})=-\frac{1}{|\Omega|}\sum_{u,v\in\Omega}\ln\big(\frac{1}{\sqrt{2}\sigma_{u,v}^2}\big)\exp\big( -\frac{(\phi(\hat{\bm{I}})_{u,v}-\phi(\bm{I})_{u,v})^2}{2\sigma_{u,v}^2} \big)
\end{equation}
where $\phi(\bm{I})$ denotes the feature map from one layer in VGG-16~\cite{simonyan2014very} (\texttt{relu3\_3}). In our experiments, we mostly follow the practice in \cite{wu2020unsupervised} by setting $\lambda_f=0.5$ and $\lambda_p=1$. Additionally, we also compute the perceptual loss between $\hat{\bm{I}}'$ and $\bm{I}$ with the confidence map $\sigma'$.

\subsection{Learning Dense SCR via Bayesian Optimization}

We use expected improvement as the acquisition function. For the hyperparameter optimizer (to fit the $\lambda$ parameter in the kernel function at each BO iteration), we use Adam optimizer and set the learning rate to $0.1$ and we train the hyperparameter till convergence. We run $5000$ training steps and collect the validation loss for each BO step. We run $10$ BO iterations and use the same validation metric as in Appendix~\ref{supp_unrolling}. After the BO is finished, we will use the learned dense ordering to retrain the model in order to obtain the final results of dense SCR.

\subsection{Learning Generic SCR via Optimization Unrolling}\label{supp_unrolling}


We use Adam optimizer for training the DAG in the outer loops and set the initial learning rate of the DAG to $1e-3$ (same as that of all the other parameters). $\lambda_{\textnormal{DAG}}$ is initialized to 10. For each $2500$ iterations, we multiply $\lambda_{\textnormal{DAG}}$ by 10 and decrease the learning rate of the DAG optimizer by half. After training an experiment , we record the resulted DAG and discretize the adjancency matrix to a 0/1 matrix with a threshold of $0.01$. We retrain the model using this discretized and unweighted DAG. Optionally, we find that using the average normal angle error in the validation loss may greatly improve the performance. However, we stick to the case where no supervision is used to learn SCR for fair comparison.

\clearpage
\newpage
\section{Full Results of Figure~\ref{example2}}\label{app_dense_lfo}

\begin{figure}[h]
  \renewcommand{\captionlabelfont}{\footnotesize}
  \setlength{\abovecaptionskip}{8pt}
  \setlength{\belowcaptionskip}{0pt}
  \centering
  \vspace{-1mm}
  \includegraphics[width=4.7in]{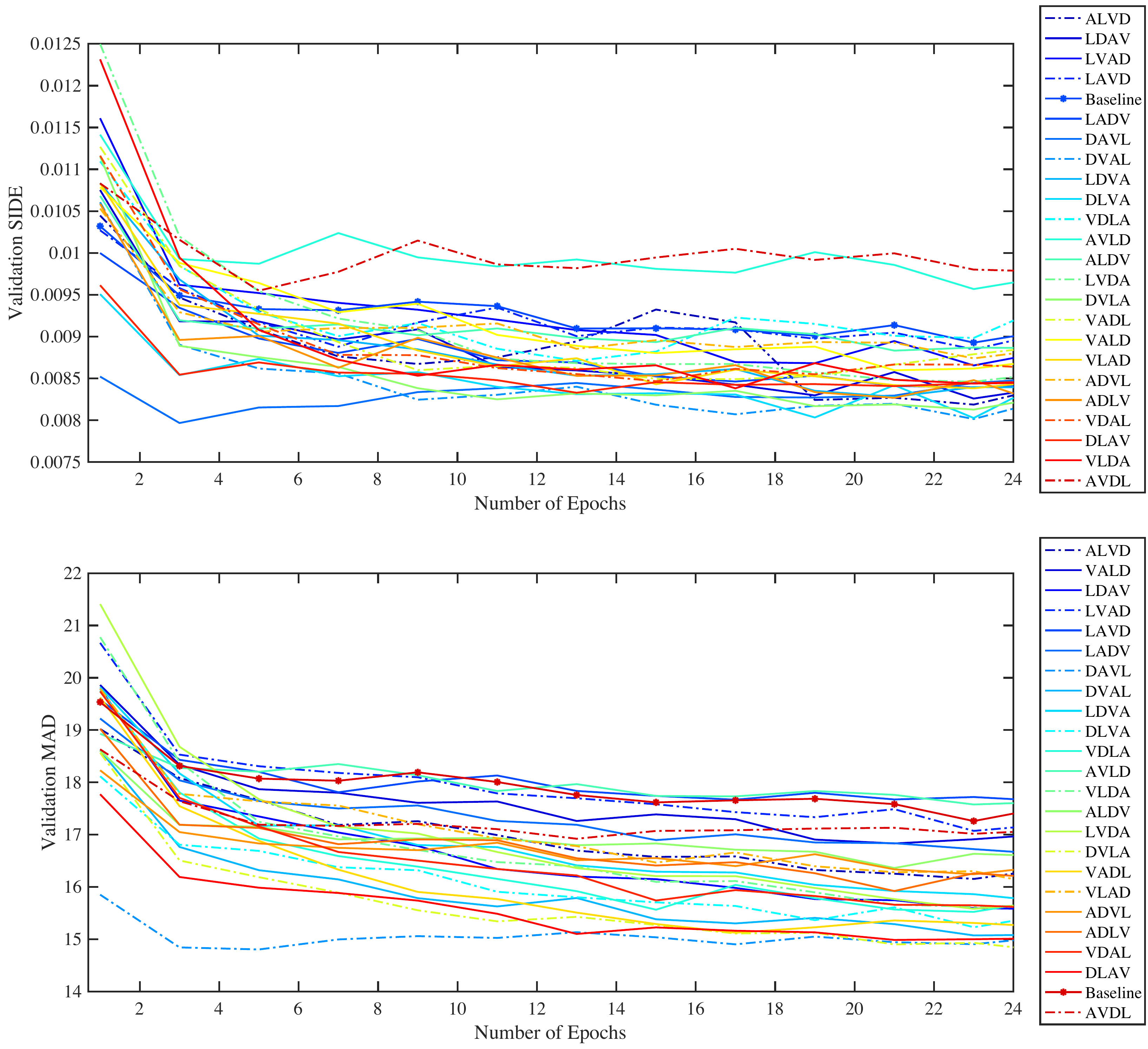}
  \caption{\footnotesize Full results of SIDE (top) and MAD (bottom) on the BFM dataset~\cite{paysan20093d} for different dense causal orderings. All the results are averaged over three different random seeds. For those dense orderings that are not plotted, we leave them out in the figure because they can not converge properly.}\label{dense_order_full}
\end{figure}

The experimental settings generally follow Appendix~\ref{exp_detail}. We plot all the dense orderings that properly converge on the BFM dataset in Fig.~\ref{dense_order_full}. Interestingly, we find that there is one dense ordering (\ie, depth-albedo-lighting-viewpoint) that is always difficult to converge. It fails to converge with all the three random seeds, showing that this dense ordering may disobey some underlying causality. The results also further justify that different dense orderings yield consistently different inductive biases and hence different out-of-distribution robustness/generalizability.

\clearpage
\newpage
\section{Formal Justification from Function Approximation}\label{app_formal_func}

\begin{theorem}[Advantages of Causal Ordering]\label{thm1}
Let the underlying lighting function be the power series $\bm{L}:=g^*(\bm{I})=\sum_{\bm{k}\in \mathbb{N}^d} a_{\bm{k}} \bm{I}^{\bm{k}}=\sum_{\bm{k}\in \mathbb{N}^d} a_{\bm{k}} I_1^{k_1}I_2^{k_2}\cdots I_d^{k_d}$ which is absolutely convergent in $[-1,1]^d$. Assume the encoders $f_L,f_V$ and $h_L$ are ReLU neural networks with depth $T$ and width $d+4$. Then for any $\delta>0$ and all $\bm{I}$, there exists a lighting function $\bm{L}_a=f_L(\bm{I})$ (Fig.~\ref{example2}(a)) that reaches the approximation accuracy: $|f_L(\bm{I})-g^*(\bm{I})|<2\sum_{\bm{k}\in\mathbb{N}^d}|a_{\bm{k}}|\exp(-d\delta(e^{-1}T^{\frac{1}{2d}}-1))$. In contrast, there exists a lighting function $\bm{L}_b=f_L(\bm{I})+h_L\circ f_V(\bm{I})$ (Fig.~\ref{example2}(b)) achieving $|f_L(\bm{I})+h_L\circ f_V(\bm{I})-g^*(\bm{I})|<2\sum_{\bm{k}\in\mathbb{N}^d}|a_{\bm{k}}|\exp(-d\delta(e^{-1}(3T)^{\frac{1}{2d}}-1))$. By construction, $\bm{L}_b$ can always achieve better approximation than $\bm{L}_a$. 
\end{theorem}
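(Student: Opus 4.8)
The plan is to derive both error bounds from a single depth-parameterized approximation lemma for ReLU networks and then observe that configurations (a) and (b) differ only in the \emph{effective depth} they make available --- $T$ versus $3T$ --- so that the comparison follows immediately from the monotonicity of the common bound. First I would establish the core lemma: a ReLU network of depth $\tau$ and width $d+4$ approximates the absolutely convergent power series $g^*$ on $[-1,1]^d$ with error at most $2\sum_{\bm{k}\in\mathbb{N}^d}|a_{\bm{k}}|\exp(-d\delta(e^{-1}\tau^{1/(2d)}-1))$. The construction is standard: approximate $x\mapsto x^2$ (hence products, via $xy=\frac{1}{4}((x+y)^2-(x-y)^2)$) by the Yarotsky-type triangle-function gadget, whose error decays geometrically in the number of layers; then compose these gadgets to realize each monomial $\bm{I}^{\bm{k}}$ and sum them with weights $a_{\bm{k}}$ to realize the truncated series $\sum_{|\bm{k}|\le N}a_{\bm{k}}\bm{I}^{\bm{k}}$. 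A depth-$\tau$, width-$(d+4)$ budget caps the reachable total degree at some $N=N(\tau)$, so the total error splits into the power-series tail $\sum_{|\bm{k}|>N}|a_{\bm{k}}|$ plus the accumulated per-monomial gadget error. Using absolute convergence on $[-1,1]^d$ together with Stirling's estimate $n!\sim(n/e)^n$ to bound these contributions collapses them into the stated exponential, with the $d$-dimensional degree count producing the exponent $1/(2d)$ and Stirling supplying the factor $e^{-1}$. Applying the lemma with $\tau=T$ yields the bound for $\bm{L}_a=f_L(\bm{I})$ verbatim.

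For $\bm{L}_b$, I would exploit the assumed generative structure $g^*=f_L^*+h_L^*\circ f_V^*$, which matches the architecture of configuration (b) exactly. The crucial observation is that the approximation error in the squaring-gadget construction is governed by the \emph{total} number of gadget layers, each of which geometrically refines the product/monomial approximation. Configuration (a) has only the $T$ layers of $f_L$ at its disposal, whereas configuration (b) pools the layers of all three depth-$T$ sub-networks $f_L$, $f_V$, $h_L$, giving a total budget of $3T$ layers; by the composition identity $g^*=f_L^*+h_L^*\circ f_V^*$ these layers can be arranged to jointly realize $g^*$ and to reach effective degree $N(3T)>N(T)$. Re-running the core estimate with $\tau=3T$ in place of $\tau=T$ then yields the second bound. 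Finally, since the right-hand side $2\sum_{\bm{k}}|a_{\bm{k}}|\exp(-d\delta(e^{-1}\tau^{1/(2d)}-1))$ is strictly decreasing in $\tau$ and $3T>T$, the bound for $\bm{L}_b$ is strictly smaller than that for $\bm{L}_a$, proving that $\bm{L}_b$ always achieves better approximation.

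The hard part will be pinning down the exact constants in the core lemma, rather than the comparison itself, which is immediate once the lemma is in hand. I expect the delicate accounting to lie in (i) the depth-versus-degree tradeoff --- how many layers are consumed to realize a monomial of a given degree under the narrow width $d+4$, which fixes the map $\tau\mapsto N(\tau)$ and thus the exponent $1/(2d)$ --- and (ii) controlling the accumulated multiplication error so that it does not dominate the tail, which is where Stirling's formula and the factor $e^{-1}$ enter. The remaining subtlety is making rigorous the claim that the summed-and-composed architecture genuinely pools its layers into an effective budget of $3T$, i.e.\ that composition stacks layers additively in depth while multiplying reachable degree; this requires a careful layer-by-layer count but introduces no essential new difficulty beyond the single-network case.
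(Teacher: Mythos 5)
Your overall route is the same as the paper's: a single depth-parameterized approximation bound for ReLU networks, an argument that configuration (b) pools the three sub-networks into a larger effective depth, and monotonicity of the bound in depth. The one structural difference is that the paper does not re-derive the core lemma at all: it cites it verbatim as simplified results from prior work on exponential convergence of deep ReLU approximation of analytic functions, so the constants $e^{-1}$ and the exponent $1/(2d)$ that you expect to be ``the hard part'' are taken off the shelf. The paper's own technical content lies instead in formalizing the pooling step: it defines a skip-connection network class $\mathcal{F}_{T,K}$ and proves an addition proposition ($\mathcal{F}_{T_1,K}+\mathcal{F}_{T_2,K}\subseteq\mathcal{F}_{T_1+T_2,K}$) and a composition proposition ($\mathcal{F}_{T_2,K}\circ\mathcal{F}_{T_1,K+1}\subseteq\mathcal{F}_{T_1+T_2,K+1}$). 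Note also that your appeal to the generative decomposition $g^*=f_L^*+h_L^*\circ f_V^*$ is not available to you: the theorem only assumes $g^*$ is an absolutely convergent power series (that decomposition appears only in the paper's informal motivation), though your actual mechanism never really uses it.

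The genuine gap is the step you wave off as ``a careful layer-by-layer count'' with ``no essential new difficulty'': pooling to an effective budget of $3T$. What the argument needs is a \emph{reverse} inclusion --- that the fixed architecture $f_L(\bm{I})+h_L(f_V(\bm{I}))$, with each component of depth $T$, can \emph{realize} some approximant enjoying the depth-$3T$ guarantee. Composition genuinely stacks depth: if $f_V$ outputs the hidden state of the first $T$ layers together with the input and a running partial sum (this is exactly what the width budget $d+4$ accommodates), then $h_L\circ f_V$ can emulate an arbitrary depth-$2T$ chain. But addition does not stack depth: $f_L$ runs in parallel with $h_L\circ f_V$, so the longest computational path in the whole architecture is $2T$; a sum of branches is computable at the maximum of their depths, never at their sum. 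Your claim that the three networks can be ``arranged to jointly realize'' a degree-$N(3T)$ approximant therefore fails, and for the same reason the paper's own final assertion (that $f_L+h_L\circ f_V$ ``can represent arbitrary function in $\mathcal{F}_{3T,3}$'') does not follow from its propositions, which establish inclusions in the opposite, unneeded direction. What survives is an effective depth of $2T$: an error bound of $2\sum_{\bm{k}\in\mathbb{N}^d}|a_{\bm{k}}|\exp(-d\delta(e^{-1}(2T)^{\frac{1}{2d}}-1))$, which is still strictly smaller than the depth-$T$ bound and still yields the qualitative conclusion that $\bm{L}_b$ beats $\bm{L}_a$ --- but not the stated $(3T)^{\frac{1}{2d}}$ rate, in either your write-up or the paper's.
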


\begin{proof}

In order to prove the theorem, the basic proof sketch is as follows:
\begin{itemize}
    \item We first reduce addition and composition of multiple neural networks to a single neural network with increasing depth;
    \item We then derive a convergence rate where the depth improves the approximation error exponentially.
\end{itemize}

Before we introduce the detailed proof, we briefly review the problem setup.

\begin{definition}
Given a function $f(x_1,\cdots,x_d)$, if there are variables $\{y_{1:T,1:K}\}$ where $\{y_{1:T,1:K}\}=\{x_{i,j}|i=1,\cdots,T;j=1,\cdots,K\}$ such that we have
\begin{equation}\label{thm_defn1}
\begin{aligned}
y_{1,k}&\in\textnormal{RLinear}(x_{1:d})\\
y_{t+1,k}&\in\textnormal{RLinear}(x_{1:d},y_{t,1:T})\\
f&\in\textnormal{Linear}(x_{1:d},y_{1:T,1:K})
\end{aligned}
\end{equation}
where $k=1,\cdots,K$, $t=1,\cdots,T$, $\textnormal{Linear}(x_{1:d})$ denotes the set of arbitrary linear combinations of $x_{1:d}$ (\ie, there exist $\beta_i\in\mathbb{R}$ such that $y=\beta_0+\beta_1 x_1+\cdots+\beta_d x_d$) and $\textnormal{RLinear}(x_{1:d})$ denotes the set of arbitrary linear combinations with ReLU activation (\ie, $\textnormal{RLinear}(x_{1:d})=\textnormal{ReLU}(\textnormal{Linear}(x_{1:d}))=\max(\textnormal{Linear}(x_{1:d}),0)$), then $f$ is said to be in the neural network class $\mathcal{F}_{T,K}(\mathbb{R}^d)$ and $\{y_{1:T,1:K}\}$ is a set of hidden variables of $f$.
\end{definition}

In fact, the neural networks considered here is slightly different from the standard ResNet~\cite{he2016deep}. We are using neural networks with skip connections from the input layer to the hidden layers and from the hidden layers to the output layer. However, such neural networks are equivalent to standard fully connected neural networks without skip connection.

\begin{proposition}
A function $f\in \mathcal{F}_{T,K}(\mathbb{R}^d)$ can be represented by a ReLU network with depth $T+1$ and width $K+d+1$.
\end{proposition}
\begin{proof}
We first require the hidden variables of $f$ (\ie, $\{y_{1:T,1:K}\}$) to satisfy Eq.~\eqref{thm_defn1}, namely
\begin{equation}
    f=\alpha_0 + \sum_{i=1}^d \alpha_i x_i +\sum_{t=1}^T \sum_{k=1}^K \beta_{t,k} y_{t,k}.
\end{equation}
Then we construct the following variables $\{h_{1:T,1:K}\}$:
\begin{equation}
\begin{aligned}
    h_{t,1:K}&=y_{t,1:K}\\
    h_{t,K+1:K+d}&=x_{1:d}
\end{aligned}
\end{equation}
where $t=1,\cdots,T$, and additionally,
\begin{equation}
\begin{aligned}
h_{1,K+d+1}&=\alpha_0 + \sum_{i=1}^d\alpha_i x_i\\
h_{t+1,K+d+1}&=h_{t,K+d+1}+\sum_{k=1}^K\beta_{t,k}h_{t,k}
\end{aligned}
\end{equation}
where $t=1,\cdots,T-1$. Because $h_{1,k}\in\textnormal{RLinear}(x_{1:d})$, $h_{t+1,k}\in\textnormal{RLinear}(h_{t,1:K+d+1})$ for $k=1,\cdots,K+d+1$ and  $t=1,\cdots,T-1$, we can observe that $f\in\textnormal{Linear}(h_{T,1:K+d+1})$ is a representation of a standard neural network. $\hfill \blacksquare$
\end{proof}

We discuss in the next two propositions how addition and composition of neural networks can be viewed as increasing the number of depth.

\begin{proposition}\label{func_add}
For the addition of two neural networks $f_1$ and $f_2$, if $f_1\in\mathcal{F}_{T_1,K}(\mathbb{R}^d)$ and $f_2\in\mathcal{F}_{T_2,K}(\mathbb{R}^d)$, then $f_1+f_2\in\mathcal{F}_{T_1+T_2,K}$. This also leads to
\begin{equation}
    \mathcal{F}_{T_1,K}+\mathcal{F}_{T_2,K}\subseteq \mathcal{F}_{T_1+T_2,K}
\end{equation}
which indicates that the addition of two neural networks of the same width is equivalent to a single neural network with the same width and the added depth of the two neural networks.
\end{proposition}
\begin{proof}
We define $\{y^1_{1:T_1,1:K}\}$ and $\{y^2_{1:T_2,1:K}\}$ as the hidden variables of $f_1$ and $f_2$, respectively. Then we let
\begin{equation}
\begin{aligned}
y_{1:T_1,1:K}&=y^1_{1:T_1,1:K}\\
y_{T_1+1:T_1+T_2,1:K}&=y^2_{1:T_2,1:K}.
\end{aligned}
\end{equation}
Therefore, we have that $\{y_{1:T_1+T_2,1:K}\}$ is a set of hidden variables for $f_1+f_2$, leading to $f_1+f_2\in\mathcal{F}_{T_1+T_2,K}$.$\hfill \blacksquare$
\end{proof}

\begin{proposition}\label{func_composition}
For the composition of two neural networks $f_1$ and $f_2$, if we have $f_1(x_1,\cdots,x_d)\in\mathcal{F}_{T_1,K+1}(\mathbb{R}^d)$ and $f_2(y,x_1,\cdots,x_d)\in\mathcal{F}_{T_2,K}(\mathbb{R}^{d+1})$, then $f_2(f_1(x_1,\cdots,x_d),x_1,\cdots,x_d)\in\mathcal{F}_{T_1+T_2,K+1}(\mathbb{R}^d)$. This also leads to
\begin{equation}
    \mathcal{F}_{T_2,K}\circ\mathcal{F}_{T_1,K+1}\subseteq \mathcal{F}_{T_1+T_2,K+1}
\end{equation}
which indicates that the composition of two neural networks can be roughly viewed as a single neural network with added depth of these two neural networks.
\end{proposition}
\begin{proof}
We define $\{y^1_{1:T_1,1:K}\}$ and $\{y^2_{1:T_2,1:K}\}$ as the hidden variables of $f_1$ and $f_2$, respectively. Then we let
\begin{equation}
    \begin{aligned}
    y_{1:T_1,1:K+1}&=y^1_{1:T_1,1:K+1}\\
    y_{T_1+1:T_1+T_2,1:K}&=y^2_{1:T_2,1:K}\\
    y_{T_1+1,L+1}=y_{T_1+2,K+1}&=\cdots=y_{T_1+T_2,K+1}=f_1(x_1,\cdots,x_d)
    \end{aligned}
\end{equation}
Since $\{y_{1:T_1+T_2,1:K+1}\}$ is a set of hidden variables of $f_2(f_1(x_1,\cdots,x_d),x_1,\cdots,x_d)$, then we can see that the composition property holds. $\hfill \blacksquare$
\end{proof}

Then we introduce the following lemma to establish the connection between depth and approximation error of analytic functions:
\begin{lemma}[Simplified results from \cite{wang2018exponential}]\label{thm_lemma}
Let $f$ be an analytic function over $(-1,1)^d$. Assume that the power series $f(\bm{x})=\sum_{\bm{i}\in\mathbb{N}^d}a_{\bm{i}}\bm{x}^{\bm{i}}$ is absolutely convergent in $[-1,1]^d$, where $\bm{x}=[x_1,\cdots,x_d]$. Then for any $\delta>0$, there exists a function $\hat{f}$ that can be represented by a deep ReLU neural network with depth $T$ and width $d+4$, such that 
\begin{equation}
    \left| f(\bm{x})-\hat{f}(\bm{x}) \right|<2\sum_{\bm{i}\in\mathbb{N}^d}\cdot\exp\left( -d\delta\left(e^{-1}T^{\frac{1}{2d}}-1\right) \right)
\end{equation}
which holds for any $\bm{x}\in [-1+\delta,1-\delta]^d$.
\end{lemma}

Suppose $f_L,h_L,f_V$ are ReLU neural networks with depth $T$ and width $d+4$. We consider the discrepancy between standard lighting function from independent SCR: $f_L(\bm{I})$ and underlying lighting function $\bm{L}:=g^*(\bm{I})=\sum_{\bm{k}\in \mathbb{N}^d} a_{\bm{k}} \bm{I}^{\bm{k}}$. Applying Lemma~\ref{thm_lemma}, we end up with 
\begin{equation}
    \left|f_L(\bm{I})-g^*(\bm{I})\right|<2\sum_{\bm{k}\in\mathbb{N}^d}|a_{\bm{k}}|\exp\left(-d\delta\left(e^{-1}T^{\frac{1}{2d}}-1\right)\right).
\end{equation}

According to the proof of Lemma~\ref{thm_lemma} in \cite{wang2018exponential}, we learn that $\hat{f}$ is a function from $\mathcal{F}_{T,3}$. We can use $f_L$ and $f_V$ to denote functions in $\mathcal{F}_{T,3}$. Then we use $h_L$ to denote functions from $\mathcal{F}_{T,2}$. This can be easily done by setting some of the neurons to be zero. In fact, we only need to let $h_L$ to be a neural network with width $d+3$, which can further weaken the current assumption. Then based on Proposition~\ref{func_add} and Proposition~\ref{func_composition}, we obtain that the lighting function $f_L(\bm{I})+h_L\circ f_V(\bm{I})$ can represent arbitrary function in $\mathcal{F}_{3T,3}$. Finally, we apply Lemma~\ref{thm_lemma} again and show that 
\begin{equation}
    \left|f_L(\bm{I})+h_L\circ f_V(\bm{I})-g^*(\bm{I})\right|<2\sum_{\bm{k}\in\mathbb{N}^d}|a_{\bm{k}}|\exp\left(-d\delta\left(e^{-1}(3T)^{\frac{3}{2d}}-1\right)\right)
\end{equation}
which achieves better convergence for the approximation than the lighting function from independent SCR. This concludes the proof. $\hfill \blacksquare$
\end{proof}

\newpage
\section{Experiments on CUDA-9}\label{exp_cuda9}
All the experiments in the main paper are run on CUDA-10, so the experimental settings are fair. However, in order to make comprehensive comparison, we also run our methods on CUDA-9 which is exactly the same as \cite{wu2020unsupervised}. We put the quantitative results in Table~\ref{bfm_cuda9}. 
\begin{table}[h]
    \footnotesize
	\centering
	\renewcommand{\captionlabelfont}{\footnotesize}
	\vspace{-3mm}
	\begin{tabular}{c|cc}
		  Method & SIDE ($\times 10^{-2}$) $\downarrow$ & MAD (deg.) $\downarrow$\\
		  \shline
		   Supervised & \textbf{0.410} {\scriptsize$\pm$0.103} & \textbf{10.78} {\scriptsize$\pm$1.01} \\
           Constant Null Depth & 2.723 {\scriptsize$\pm$0.371} &  43.34 {\scriptsize$\pm$2.25} \\
           Average GT Depth &  1.990 {\scriptsize$\pm$0.556} &  23.26 {\scriptsize$\pm$2.85} \\\hline
           Wu et al.~\cite{wu2020unsupervised} (reported) & \textbf{0.793} {\scriptsize$\pm$0.140} & 16.51 {\scriptsize$\pm$1.56} \\
           Ho et al.~\cite{ho2021toward} (reported) & 0.834 {\scriptsize$\pm$0.169} & \textbf{15.49} {\scriptsize$\pm$1.50} \\
           Wu et al.~\cite{wu2020unsupervised} (our run) & 0.791 {\scriptsize$\pm$0.143} & 16.35 {\scriptsize$\pm$1.55} \\\hline\rowcolor{Gray}
           Independent SCR & 0.795 {\scriptsize$\pm$0.141} & 16.26 {\scriptsize$\pm$1.58}  \\\rowcolor{Gray}
           Dense SCR (BO) & 0.693 {\scriptsize$\pm$0.153} & 13.30 {\scriptsize$\pm$1.83}\\\rowcolor{Gray}
           Generic SCR (Eq.~\ref{unroll_obj_dense}) & \textbf{0.687} {\scriptsize$\pm$0.172} & \textbf{13.22} {\scriptsize$\pm$1.90} \\\rowcolor{Gray}
           Dynamic SCR  & 0.690 {\scriptsize$\pm$0.165} & 13.27 {\scriptsize$\pm$1.87} 
	\end{tabular}
	\vspace{3mm}
	\caption{\footnotesize Depth reconstruction results on BFM (CUDA-9).} \label{bfm_cuda9}
\vspace{-10mm}
\end{table}

\begin{figure}[h]
  \renewcommand{\captionlabelfont}{\footnotesize}
  \setlength{\abovecaptionskip}{8pt}
  \setlength{\belowcaptionskip}{8pt}
  \centering
  \vspace{-5mm}
  \includegraphics[width=4.6in]{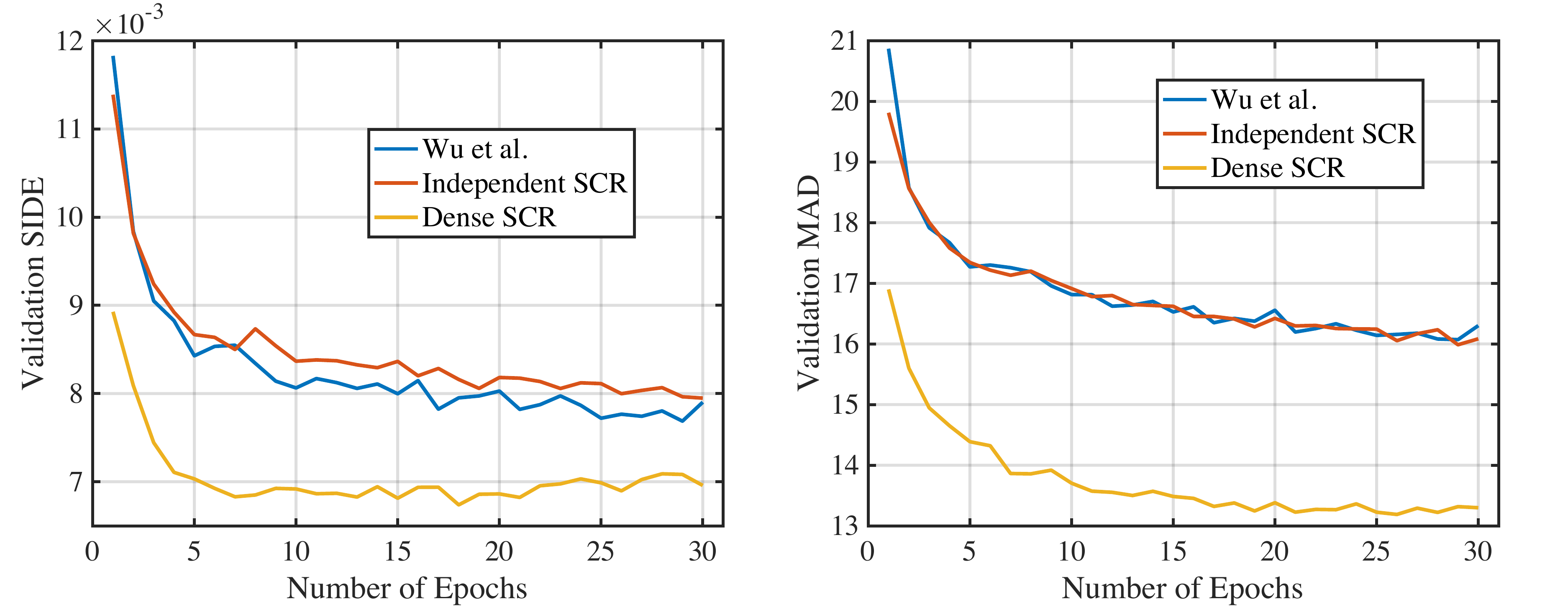}
  \caption{\footnotesize Left: SIDE on validation set; Right: MAD on validation set. (CUDA-9)}\label{convergence_cuda9}
\end{figure}

From Table~\ref{bfm_cuda9}, we observe a significant performance boost from dense SCR using BO compared to the result on CUDA-10. The final learned ordering is depth-albedo-viewpoint-lighting. On CUDA-9, the baseline~\cite{wu2020unsupervised} we run is also able to match the reported performance. Again, we verify that our own baseline (independent SCR) performs similarly to \cite{wu2020unsupervised}. For the other SCR variants, we also observe similar performance gain. Since these results do not affect the conclusion drawn in the main paper, we omit them here.

In order to better compare dense SCR to independent SCR and \cite{wu2020unsupervised}, we also plot the convergence curve for the validation SIDE and MAD. The convergence curves in Table~\ref{convergence_cuda9} show similar pattern to the ones on CUDA-10 (Fig.~\ref{convergence}). Dense SCR shows exceptional convergence speed compared to both independent SCR and \cite{wu2020unsupervised}, partially supporting our hypothesis that there exist some dense ordering that matches the underlying causal ordering and is able to perform fast and disentangled reconstruction. In general, we find that re-running the experiments on CUDA-9 only amplifies our performance gain, which better validates the superiority of SCR.

\newpage
\section{Visualization of Learned Orderings}

We typically learn SCR on the BFM dataset and use the exactly same causal ordering on the other datasets such as CelebA and cat faces, because we believe there exist common causality when reconstructing human or animal faces. Our experimental results show that this is indeed the case and the learned ordering can be transferred to other similar datasets. Therefore, we visualize some of the learned SCR on the BFM datasdet.
\vspace{-2mm}
\subsection{Learned Orderings of Dense SCR}
\vspace{-1mm}
\begin{figure}[h]
  \renewcommand{\captionlabelfont}{\footnotesize}
  \setlength{\abovecaptionskip}{7pt}
  \setlength{\belowcaptionskip}{-5pt}
  \centering
  \vspace{-5mm}
  \includegraphics[width=4.6in]{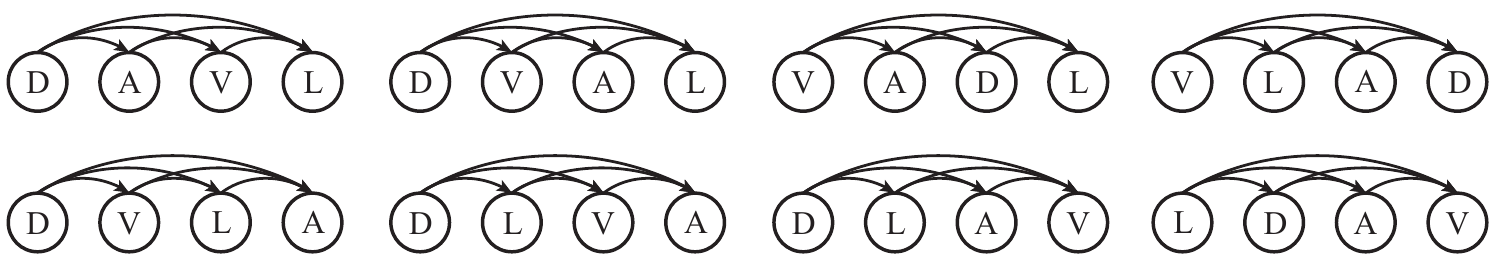}
  \caption{\footnotesize Some learned dense causal orderings using Bayesian optimization.}\label{dense_lfo_vis}
\end{figure}

We visualize some of the learned dense orderings using Bayesian optimization in Fig.~\ref{dense_lfo_vis}. We can observe that these learned dense orderings are generally quite similar. Some of them only differ by one pair-wise permutation, such as DAVL and DVAL. Moreover, we observe that viewpoint is usually put in front of lighting and depth is put in front of lighting as well. In general, these dense orderings well cover the potential arrangements done by domain experts.

\vspace{-2mm}
\subsection{Learned Orderings of Generic SCR}
\vspace{-1mm}
\begin{figure}[h]
  \renewcommand{\captionlabelfont}{\footnotesize}
  \setlength{\abovecaptionskip}{7pt}
  \setlength{\belowcaptionskip}{-5pt}
  \centering
  \vspace{-5mm}
  \includegraphics[width=4.6in]{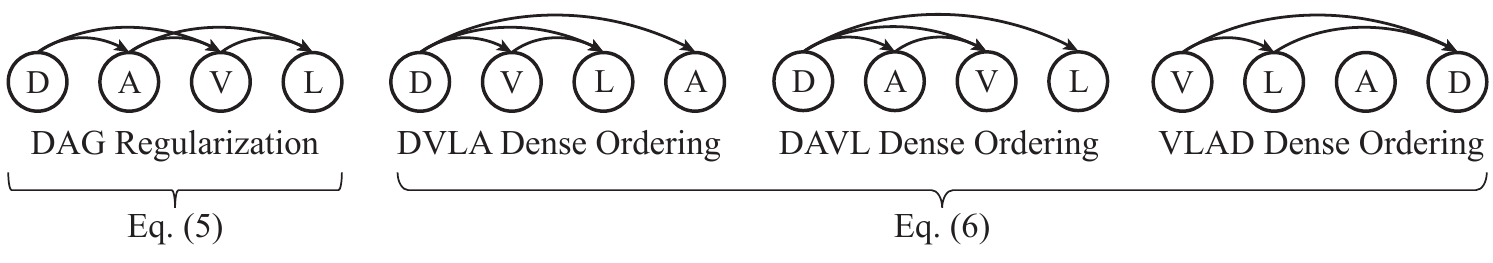}
  \caption{\footnotesize Some learned generic causal orderings using either DAG regularization (Eq.~\eqref{unroll_obj}) or dense ordering mask (Eq.~\eqref{unroll_obj_dense}).}\label{generic_lfo_vis}
\end{figure}

We visualize the learned generic orderings using either Eq.~\eqref{unroll_obj} or Eq.~\eqref{unroll_obj_dense} in Fig.~\ref{generic_lfo_vis}. We find that learning DAGs with a dense ordering mask usually leads to more sparse DAGs, compared to the general DAG regularization. It may be because the dense ordering has largely reduced the feasible space of the resulting DAG. However, if we start with a dense ordering that yields poor generalizability, then it is unlikely that generic SCR can learn a well-performing DAG. Specifically, VLAD returns a DAG with fewer edges while DVLA and DAVL return DAGs with more edges. We suspect that because the dense orderings DVLA and DAVL perform better than VLAD by a considerable margin, both DVLA and DAVL may contain more crucial causal directions than VLAD. Therefore, generic SCR tends to keep more edges for DVLA and DAVL while drop more edges for VLAD. Interesting, if generic SCR drops all the edges, we will end up with an independent SCR. If generic SCR learns to drop all the edges, it is likely that the initial dense ordering performs poorly. In contrast, learning generic SCR with Eq.~\eqref{unroll_obj} requires less prior knowledge and is not dependent on the performance of the initial dense ordering.

\vspace{-2mm}
\subsection{Learned Orderings of Dynamic SCR}
\vspace{-1mm}
\begin{figure}[h]
  \renewcommand{\captionlabelfont}{\footnotesize}
  \setlength{\abovecaptionskip}{7pt}
  \setlength{\belowcaptionskip}{-5pt}
  \centering
  \vspace{-5mm}
  \includegraphics[width=4.6in]{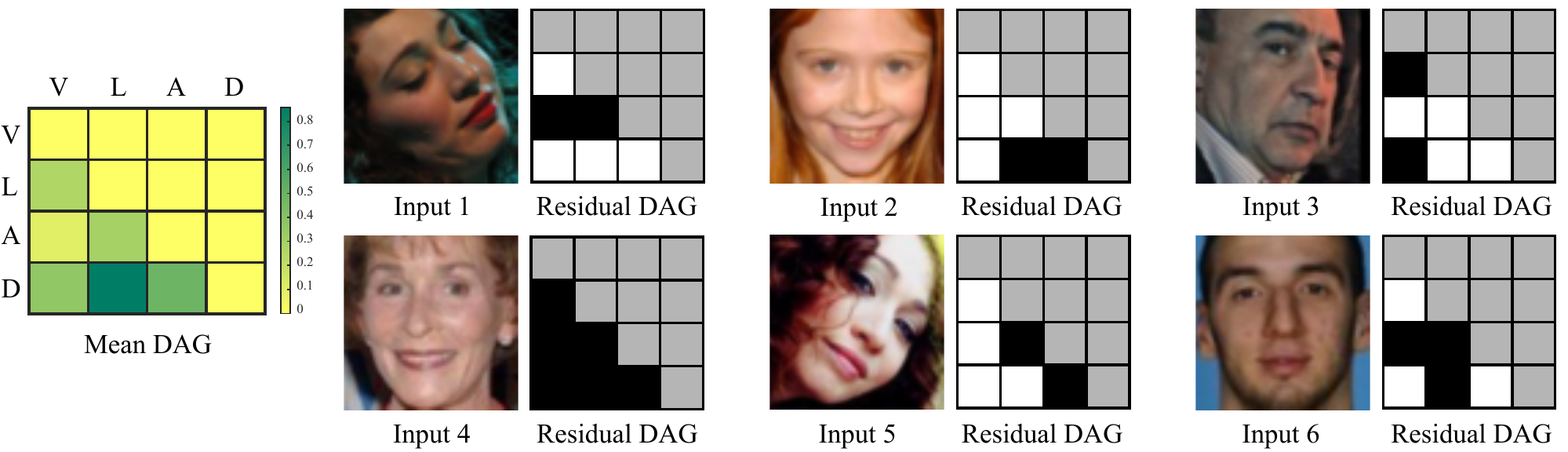}
  \caption{\footnotesize Some examples of the learned dynamic causal orderings. For the DAG adjacency matrix $\bm{M}$, each element $M_{i,j}$ is defined as the directed edge from $j$ to $i$. The mean DAG is the average adjacency matrix over all the training samples. In the residual DAG, the white block denotes this edge weight is decreased (from the mean DAG) for the input, the black block indicates this edge weight is increased (from the mean DAG) for the input, and the gray block has been masked out by the initial dense ordering.}\label{dyn_lfo_vis}
\end{figure}

We visualize some learned dynamic orderings for dynamic SCR in Fig.~\ref{dyn_lfo_vis}. In order to give a more intuitive visualization, we randomly select 6 input images and use the learned dynamic SCR to infer its continuous DAG (\ie, the edge weight is continuous from $0$ to $1$ because of cosine similarity) and compare their difference. For Input 1, we find that the albedo requires additional constraints from viewpoint and lighting. This makes intuitive senses, since this image has poor lighting and it may be difficult to disentangle its lighting and albedo. Moreover, its viewpoint is also challenging to estimate. For Input 4, this image is very blurry, making all the 3D factors difficult to estimate. Therefore, dynamic SCR tends to push the DAG to be the dense ordering such that different factors can pose constraints to each other, leading to better disentanglement.

\newpage
\section{Additional Experiments}
\vspace{-1mm}
\subsection{Pose Preservation between Input and Reconstructed Faces}
\vspace{-0.5mm}

We use a face pose pretrained network from \cite{albiero2021img2pose} to estimate the pose of the original and reconstructed faces, and the compare their angle difference. The results are shown in Table~\ref{pose}. We can observe that dense SCR, generic SCR and dynamic SCR preserves better pose for the reconstructed faces.

\begin{table}[h]
\footnotesize
\centering
\renewcommand{\captionlabelfont}{\footnotesize}
\vspace{-3mm}
\begin{tabular}{c | c} 
\specialrule{0em}{-6pt}{0pt}
Method & Angle Difference $\downarrow$ \\
\shline
Wu et al.~\cite{wu2020unsupervised} & 12.5 $\pm$ 8.9 \\
Independent SCR & 12.3 $\pm$ 9.3 \\\hline\rowcolor{Gray}
Dense SCR & 11.0 $\pm$ 9.1 \\\rowcolor{Gray}
Generic SCR & 10.8 $\pm$ 8.3  \\\rowcolor{Gray}
Dynamic SCR & \textbf{10.5 $\pm$ 8.2}  \\
\end{tabular}
\vspace{1mm}
\caption{\footnotesize Angle difference (degree) between input and reconstructed faces.}
\vspace{-4mm}
\label{pose}
\end{table}

\vspace{-10mm}
\subsection{Identity Preservation between Input and Reconstructed Faces}
\vspace{-0.5mm}

We use three views for the reconstructed image (see Fig.~\ref{id_exp}). For each test image on CelebA, we have 4 positive samples (original + 3 reconstructed images) and the images from different identities are negative samples. We randomly construct 3,000 positive pairs and 3,000 negative pairs as the testing set. We use a pretrained FaceNet~\cite{schroff2015facenet} to compute the cosine similarity between positive and negative pair. Results in Table~\ref{id_test} show that all the SCR variants yield better identity preservation then both \cite{wu2020unsupervised} and independent SCR. We also use SphereFace\cite{liu2017sphereface,liu2022sphereface} and conclude the same advantage for the proposed SCR.

\begin{figure}[h]
  \renewcommand{\captionlabelfont}{\footnotesize}
  \setlength{\abovecaptionskip}{2.5pt}
  \setlength{\belowcaptionskip}{-14pt}
  \centering
  \vspace{-4.5mm}
  \includegraphics[width=3in]{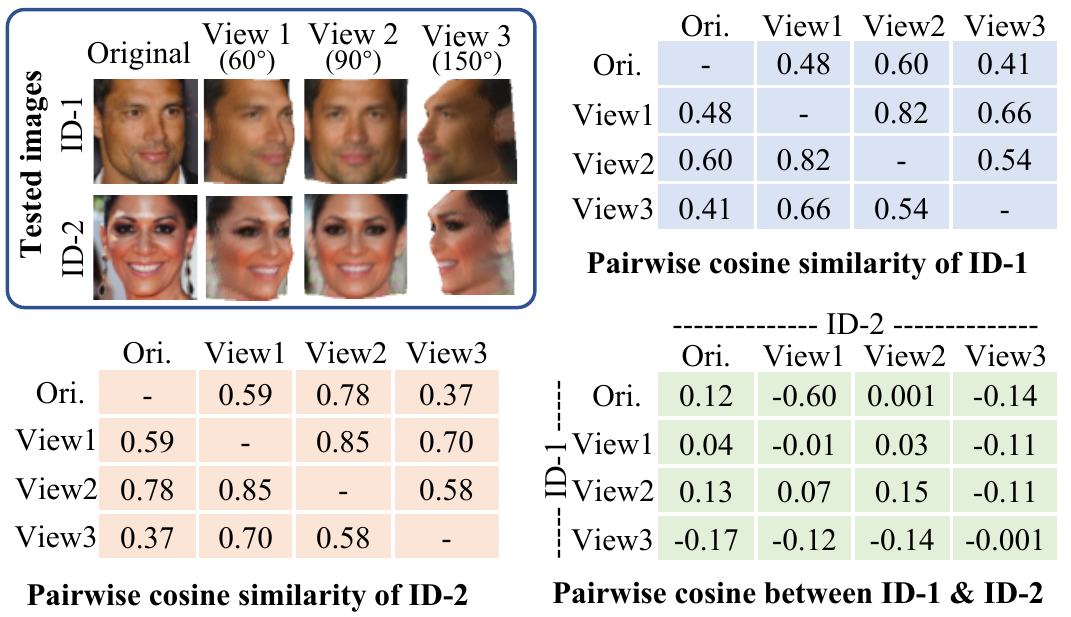}
  \caption{\footnotesize An example of identity preservation.}\label{id_exp}
\end{figure}

\begin{table}[h]
\footnotesize
\centering
\renewcommand{\captionlabelfont}{\footnotesize}
\begin{tabular}{c | c c} 
\specialrule{0em}{-18pt}{0pt}
  Method & ~~Avg. Pos. Cos. Sim. $\uparrow$~~ & ~~Avg. Neg. Cos. Sim. $\downarrow$~~ \\
 \shline
Wu et al.~\cite{wu2020unsupervised} & 0.63 $\pm$ 0.25 & 0.18 $\pm$ 0.26 \\
Independent SCR & 0.64 $\pm$ 0.22 & 0.19 $\pm$ 0.24 \\\hline\rowcolor{Gray}
Dense SCR & 0.68 $\pm$ 0.19 & 0.14 $\pm$ 0.19  \\\rowcolor{Gray}
Generic SCR & 0.67 $\pm$ 0.17 & \textbf{0.12 $\pm$ 0.15}  \\\rowcolor{Gray}
Dynamic SCR & \textbf{0.68 $\pm$ 0.18} & 0.12 $\pm$ 0.18   \\
\end{tabular}
\vspace{1mm}
\caption{\footnotesize Average cosine similarity of positive/negative pairs.}
\vspace{-5mm}
\label{id_test}
\end{table}

\clearpage
\newpage
\section{More Qualitative Results}

\subsection{Synthetic Faces}
\begin{figure}[h]
  \renewcommand{\captionlabelfont}{\footnotesize}
  \setlength{\abovecaptionskip}{7pt}
  \setlength{\belowcaptionskip}{-5pt}
  \centering
  \vspace{-5mm}
  \includegraphics[width=4.7in]{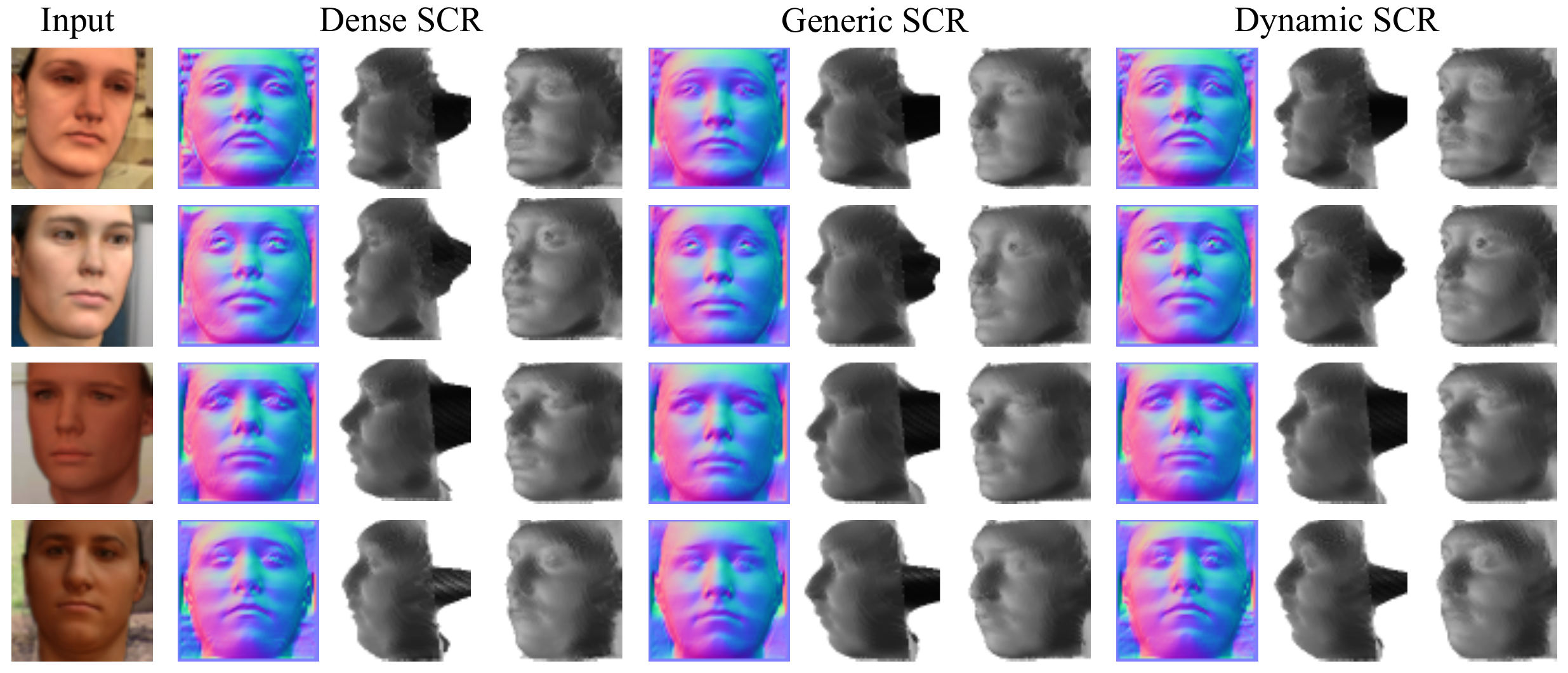}
  \caption{\footnotesize Qualitative results of normal and shapes on synthetic faces.}\label{sface_vis_app}
\end{figure}
\subsection{CelebA}
\begin{figure}[h]
  \renewcommand{\captionlabelfont}{\footnotesize}
  \setlength{\abovecaptionskip}{7pt}
  \setlength{\belowcaptionskip}{-15pt}
  \centering
  \vspace{-5mm}
  \includegraphics[width=4.7in]{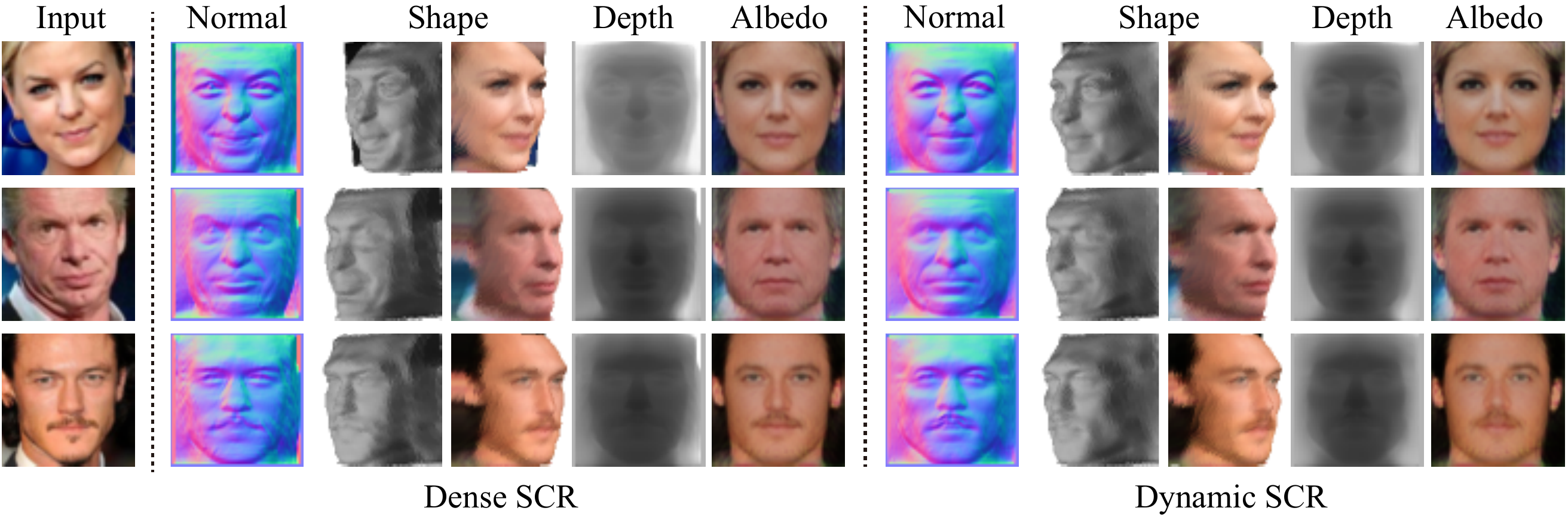}
  \caption{\footnotesize Qualitative results on CelebA faces.}\label{cface_vis_app}
\end{figure}
\newpage
\vspace{-2mm}
\subsection{Cat Faces}
\vspace{-1mm}
\begin{figure}[h]
  \renewcommand{\captionlabelfont}{\footnotesize}
  \setlength{\abovecaptionskip}{7pt}
  \setlength{\belowcaptionskip}{-12pt}
  \centering
  \vspace{-6mm}
  \includegraphics[width=4.7in]{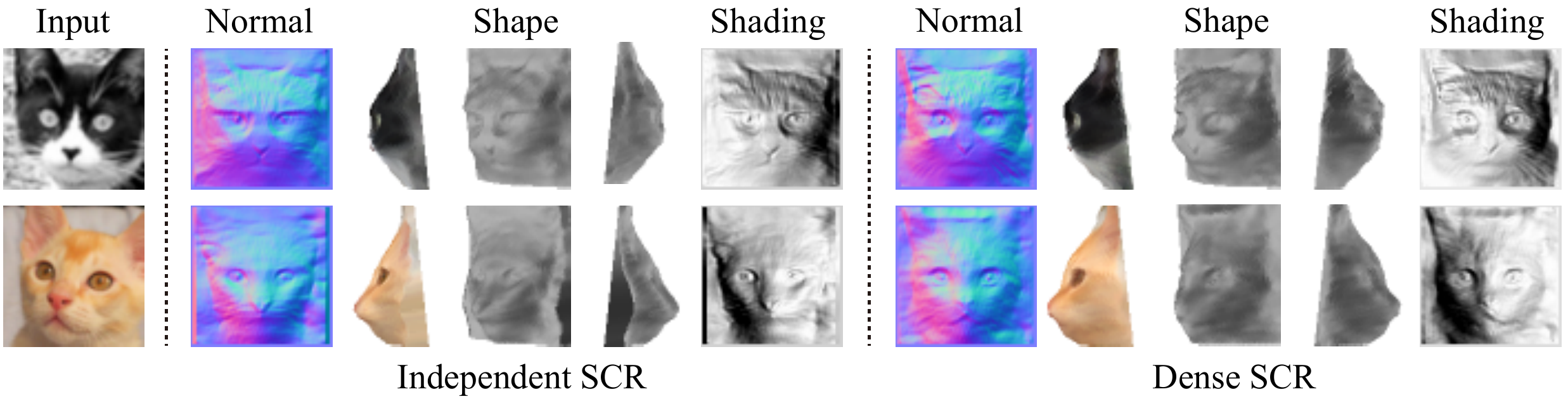}
  \caption{\footnotesize Qualitative results on cat faces.}\label{cat_vis_app}
\end{figure}
\vspace{-2mm}
\subsection{Cars}
\vspace{-1mm}
\begin{figure}[h]
  \renewcommand{\captionlabelfont}{\footnotesize}
  \setlength{\abovecaptionskip}{7pt}
  \setlength{\belowcaptionskip}{-12pt}
  \centering
  \vspace{-6mm}
  \includegraphics[width=4in]{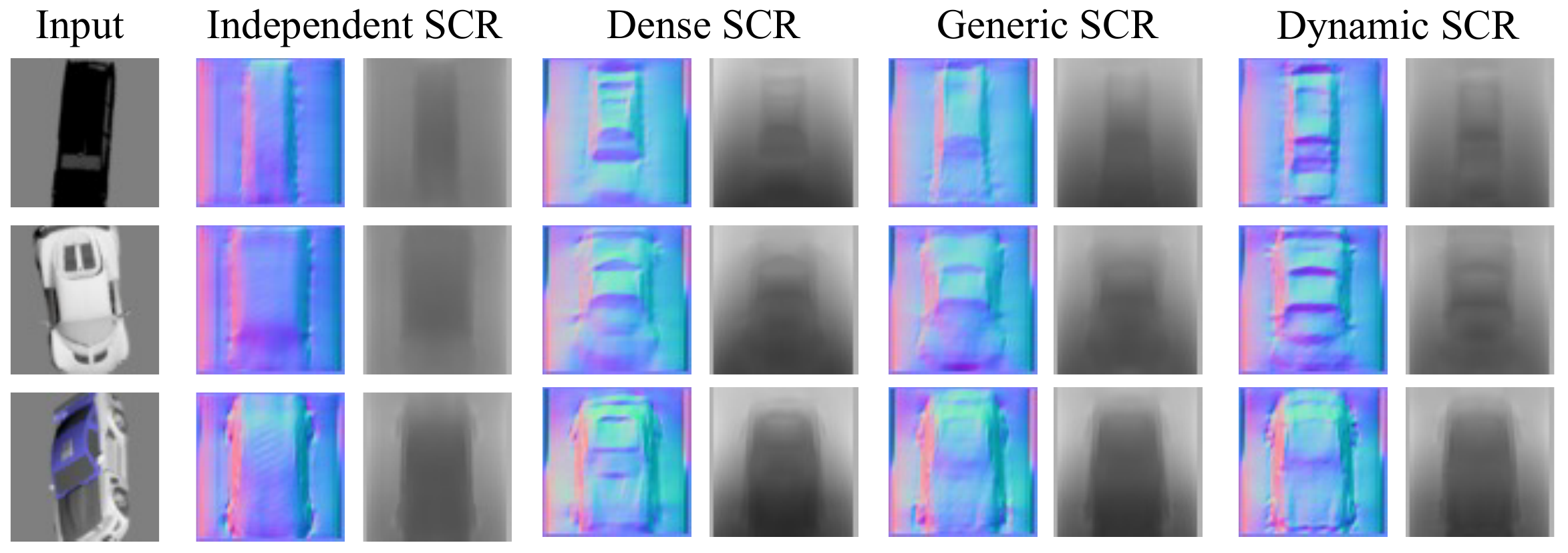}
  \caption{\footnotesize Qualitative results of normal and depth on cars.}\label{car_vis_app}
\end{figure}

\vspace{-2mm}
\subsection{Comparison of Different Datasets}
\vspace{-1mm}
\begin{figure}[h]
  \renewcommand{\captionlabelfont}{\footnotesize}
  \setlength{\abovecaptionskip}{7pt}
  \setlength{\belowcaptionskip}{-12pt}
  \centering
  \vspace{-6mm}
  \includegraphics[width=4in]{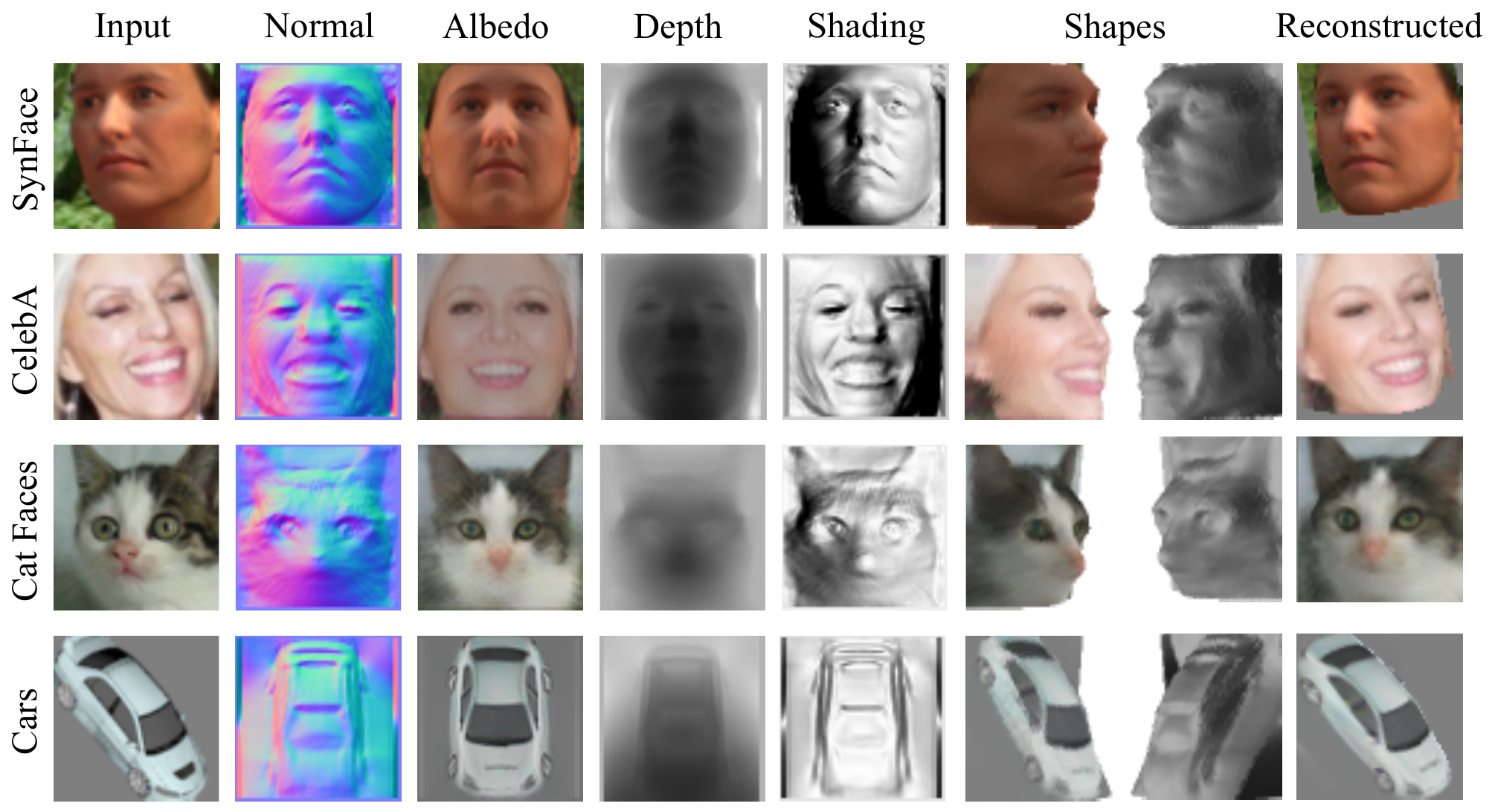}
  \caption{\footnotesize Qualitative results of all the 3D factors on different datasets (Dense SCR).}\label{all_qual}
\end{figure}

\end{document}